\newif\ifarxiv\arxivtrue
\newtheorem{definition}{Definition}
\newtheorem{proposition}{Proposition}
\newcommand{\E}{\mathop{\mathbb{E}}}
\newcommand{\R}{\mathbb{R}}
\newcommand{\dn}{\ensuremath{\mathcal{N}}\xspace}
\newcommand{\ds}{\ensuremath{\mathcal{S}}\xspace}
\newcommand{\dx}{\ensuremath{\mathcal{X}}\xspace}
\newcommand{\flipset}{\ensuremath{F(h,G)}\xspace}
\newcommand{\posflip}{\ensuremath{F^+(h,G)}\xspace}
\newcommand{\negflip}{\ensuremath{F^-(h,G)}\xspace}
\newcommand{\posflipp}{\ensuremath{F^+(h,G')}\xspace}
\newcommand{\negflipp}{\ensuremath{F^-(h,G')}\xspace}
\newcommand{\posflipf}{\ensuremath{F^+(h,f)}\xspace}
\newcommand{\negflippf}{\ensuremath{F^-(h,f')}\xspace}
\newcommand{\negflipf}{\ensuremath{F^-(h,f)}\xspace}
\newcommand{\posflippf}{\ensuremath{F^+(h,f')}\xspace}
\newcommand{\starflip}{\ensuremath{F^\star(h,G)}\xspace}
\begin{document}
	\title{FlipTest: Fairness Testing via Optimal Transport}
	
	\author{Emily Black}
	\authornote{The first two authors contributed equally to this research.}
	\affiliation{\institution{Carnegie Mellon University}}
	\email{emilybla@andrew.cmu.edu}
	
	\author{Samuel Yeom}
	\authornotemark[1]
	\affiliation{\institution{Carnegie Mellon University}}
	\email{syeom@cs.cmu.edu}
	
	\author{Matt Fredrikson}
	\affiliation{\institution{Carnegie Mellon University}}
	\email{mfredrik@cs.cmu.edu}
	
	\begin{abstract}
		We present FlipTest, a black-box technique for uncovering discrimination in classifiers.
		FlipTest is motivated by the intuitive question: \emph{had an individual been of a different protected status, would the model have treated them differently?} Rather than relying on causal information to answer this question, FlipTest leverages optimal transport to match individuals in different protected groups, creating similar pairs of in-distribution samples. We show how to use these instances to detect discrimination by constructing a \emph{flipset}: the set of individuals whose classifier output changes post-translation, which corresponds to the set of people who may be harmed because of their group membership. 
		To shed light on \emph{why} the model treats a given subgroup differently, FlipTest produces a \emph{transparency report}: a ranking of features that are most associated with the model's behavior on the flipset.
		Evaluating the approach on three case studies, we show that this provides a computationally inexpensive way to identify subgroups that may be harmed by model discrimination, including in cases where the model satisfies group fairness criteria.
	\end{abstract}
	
	\maketitle
	
	\section{Introduction}
With the recent introduction of machine learning in sensitive applications like predictive policing~\cite{compas} and child welfare~\cite{allegheny}, the question of whether these algorithms can lead to unfair outcomes has gained widespread attention.
These concerns are not merely hypothetical.
Racial bias in the COMPAS recidivism prediction model~\cite{angwin2016machine} and gender bias in Amazon’s hiring model~\cite{dastin2018amazon} suggest that discriminatory models can have wide-reaching harmful effects.

A growing set of strategies have emerged for testing and detection of such discriminatory behaviors.
A common approach that applies to group fairness criteria such as demographic parity~\cite{feldman2015certifying} and equalized odds~\cite{hardt2016equality} is to measure aggregate statistics of the model's behavior on a targeted population. 
For example, this approach was taken with the COMPAS system for recidivism prediction by measuring false positive and negative rates across Caucasian and minority populations~\cite{compas}, and is supported by IBM's AIF360 toolkit for assessing model fairness~\cite{AIF360}.
However, there are several potential issues with this approach~\cite{chouldechova2017fair, woodworth2017learning, kleinberg2017inherent}, among which is that models can potentially ``pass'' such audits while still behaving unfairly towards individuals, or even targeted subgroups~\cite{lipton2018does}.
Additionally, while aggregate statistics can reveal broad patterns of potential discrimination, they do not reveal additional information that sheds light on the underlying discriminatory mechanism at play, which is crucial when assessing whether the behavior is truly problematic.

Recent work ~\cite{galhotra2017fairness, agarwal2018automated} instead searches for discrimination at the individual level, testing whether changes in the protected demographic status of an individual can cause changes in model outcome.
However, to change the protected demographic status of an individual, these methods simply flip the value of the protected attribute (e.g., race or gender).
While this can ensure that the model does not directly use the protected attribute to discriminate, it still allows the model to disproportionately harm a protected group by using features that are correlated with the protected attribute.

The framework of \emph{counterfactual fairness} by Kusner et al.~\cite{kusner2017counterfactual} takes these correlations into account by assuming a causal generative model for the relevant data.
This approach has the advantage that instances of discrimination against individuals or small subgroups cannot ``fly under the radar'', and the causal generative model may lead to a more nuanced and granular understanding of how the model discriminates.
However, the reliance on detailed causal information creates practical issues that may limit its applicability as well.
Namely, it may not be feasible to assume access to a generative causal model in many applications, and if an inaccurate model is used, then the conclusions may be misleading.
Moreover, the legal frameworks governing discrimination in many countries (e.g., \emph{disparate impact} in the US~\cite{griggs1971} and \emph{indirect discrimination} in the UK~\cite{uk-equality}) do not require a causal relationship with the protected status, so tests based on counterfactual fairness may fail to identify instances of legally actionable discrimination.

In this paper we present FlipTest, a \emph{black-box, efficient, and interpretable} fairness testing approach that is motivated by the following intuitive question: \emph{had an individual been of a different protected status, would the model have treated them differently?}
In contrast to aggregate testing methods, FlipTest reasons about the model's behavior on individuals and subgroups to look for evidence of discrimination, and can thus uncover forms of discrimination to which group fairness measures are blind. 
However, unlike counterfactual fairness, FlipTest \emph{does not rely on causal information}, and instead uses an \emph{optimal transport mapping}~\cite{villani2008optimal} to answer the question above. 
Consequently, the goal of our test is not to demonstrate a causal link between the protected attribute and the model's output, but to showcase salient patterns in a model's behavior that may be indicative of discrimination.
Importantly, this means that FlipTest is sensitive to both statistical and causal discrimination, and does not require strong causal assumptions about the data-generating process.
Further, we show that the information computed in this process can provide insight into not just \emph{whether} a model discriminates, but \emph{how} it does and \emph{who} is likely to be affected.

\vspace*{1ex}
\noindent
\textbf{Problem setting.}
We consider a setting where a machine learning system is being audited for discriminatory behaviors, either by well-intentioned stakeholders who may have been involved in the model's construction, or by concerned practitioners outside of the development process.
Ideally, the auditors include a domain expert who is familiar with the application and the subject population who will come into contact with the algorithm. 
We assume that these individuals and those responsible for training the model are not intentionally trying to evade a finding of discrimination.

\vspace*{1ex}
\noindent
\textbf{Optimal transport.}
FlipTest uses an \emph{optimal transport map}~\cite{villani2008optimal} to construct instances that may reveal whether a model's behavior is sensitive to changes in protected status.
An optimal transport map transforms one probability distribution into another, while minimizing a given cost defined over their respective supports.
For example, we might use an optimal transport map from the distribution of men to women in order to obtain a (female, male) pair of inputs with which to query the model.
If the model's output differs for these two people, then it may be evidence that the model discriminates on the basis of gender.
Using optimal transport to compare protected group outcomes is advantageous because it translates exactly from one distribution to another, generating inputs that are in the distribution of its image.
When the image of an optimal map corresponds to a distribution that the model was trained on, the results will reflect characteristic model behavior that can be expected when the model is deployed.
This is not necessarily true for other methods of generating alternate inputs on which to compare model outcomes, e.g. input influence measures~\cite{datta2016algorithmic}.
Further, the mapping does not rely on causal information, and can reveal associative forms of discrimination that causal tests cannot while requiring fewer assumptions about that data.

A key challenge with this approach lies in constructing the mapping, which can be computationally demanding with large, high-dimensional datasets.
In recent years there have been notable advances in methods for efficiently approximating optimal transport maps~\cite{peyre2019computational}, and FlipTest's efficacy can benefit from ongoing work on this problem.
In this paper, we present an approximation method based on generative adversarial networks (GANs)~\cite{goodfellow2014generative} (Section~\ref{sec:gan}), and validate it by showing that it is feasible to construct good, stable approximations of known precise mappings (Section~\ref{sec:validation}).

\vspace*{1ex}
\noindent
\textbf{Finding evidence of discrimination.}
Beyond examining model behavior on individual pairs, we show how the information provided by the optimal transport map can be systematically evaluated for evidence of discrimination.
In particular, we assume that the classifier in question produces binary outputs, one of which is seen as a favorable outcome and the other as unfavorable. 
We consider two sets of individuals under the optimal map: those whose prediction changes from favorable to unfavorable, and vice versa.
We call these \emph{flipsets}, and look to the relative size of each flipset for signs of potential discrimination; for example, we show how flipsets relate to well-known fairness criteria like demographic parity and equalized odds (Section~\ref{sec:fliptr}).
In addition, large flipsets can indicate subgroup-level discrimination that is not well described by these group fairness criteria (Section~\ref{sec:lipton}).
By comparing the distribution of the flipsets to the distribution of the overall population, it is often possible to identify specific subgroups that the model discriminates against.

To gain insight into how the model discriminates, we construct a \textit{transparency report} that summarizes the most salient statistical differences in the features between the flipset individuals and their images under the optimal transport mapping (Section~\ref{sec:tr}). Intuitively, the transparency report can serve as an overview of what features the model may be using to discriminate between populations.

However, it is not guaranteed that a person in the flipset is the victim of discrimination.
For example, an inter-group disparity in the model's output may be permitted if there is a sufficient justification such as a ``business necessity''~\cite[\S II.B]{barocas2016big}.
Therefore, we treat the flipset analysis and transparency report as a starting point for further investigation about the potential unfairness of the model,
which can be followed up with more expensive and conclusive analyses that look at whitebox model information~\cite{datta2016algorithmic, datta2017proxy}.

\vspace*{1ex}
\noindent
\textbf{Experiments.}
We empirically evaluate FlipTest on four different datasets (Section~\ref{sec:application}), demonstrating the testing workflow that we envision for it: Chicago Strategic Subject List~\citep{chicago}, an illustrative dataset from Lipton et al.~\cite{lipton2018does}, the law school success dataset used to illustrate counterfactual fairness~\cite{kusner2017counterfactual}, and a synthetic dataset we construct to demonstrate differences from prior work. 
Our resuls show that FlipTest provides clear, interpretable evidence of discrimination in a range of settings, along with concrete diagnostic information that is useful when reasoning about the model behaviors that are responsible for the discrimination.

We compare FlipTest against two prior approaches: counterfactual fairness~\cite{kusner2017counterfactual} and FairTest~\cite{tramer2017fairtest}.
For counterfactual fairness, we examine the dataset used by the authors to evaluate the approach, and compare FlipTest's optimal transport-based results against those obtained by making comparable interventions on the generative causal model given by Kusner et al.~\cite[Section~5]{kusner2017counterfactual}.
We find that the two lead to similar conclusions about the model's tendency to discriminate, despite the fact that FlipTest makes substantially fewer assumptions about the data.
For FairTest, an approach based on statistical hypothesis testing of subgroup discrimination, we show that FlipTest can complement FairTest by detecting instances that FairTest misses.

\vspace*{1ex}
\noindent
\textbf{Summary.}
To summarize, our main contributions are: \emph{(1)} FlipTest, a black-box, efficient testing approach for detecting discrimination in classifiers; \emph{(2)} the novel use of optimal transport for fairness testing; and \emph{(3)} the application of FlipTest to two case studies involving predictive policing (Section~\ref{sec:ssl}) and hiring (Section~\ref{sec:lipton}), as well as comparisons to prior fairness testing methods (Sections~\ref{sec:counterfactual}, \ref{sec:fairtest}), which demonstrate that our approach can identify concrete examples of unfair model behavior in cases where prior testing methods do not.

	\section{An Illustrative Example}
	\label{sec:example}
In this section, we illustrate the main concepts behind FlipTest with a running example, which uses a synthetic dataset created by Lipton et al.~\citep[\S 4.1]{lipton2018does}.
This dataset consists of two features, hair length and work experience, and supposes a binary classifier that uses these features to decide whether a given person should be hired.
We investigate possible gender bias in this model, asking whether the model's output would have been different had a given person been of a different gender.
However, it is not sufficient to simply flip the gender attribute due to correlations in the data that the model may be use as a proxy for gender: in this data, gender is correlated with hair length and work experience.
Additionally, flipping the gender attribute is not an option because the model does not directly use this attribute.

We instead map the set of women in the data to their male correspondents, and analyze cases where the model treats women differently from the men that they are mapped to.
This raises the question of which specific man a given woman should be mapped to, for which we appeal to the intuition that a difference in treatment between two people is not by itself strong evidence of discrimination unless they are similar enough that the disparity cannot be justified.
For example, when a man with 20 years of relevant experience is hired over a woman with no experience, this difference would likely be attributed to work experience rather than gender discrimination.
This motivates our use of an \emph{optimal transport mapping}~\cite{villani2008optimal}, which minimizes the sum of the distances between a woman and the man that she is mapped to (her \emph{counterpart}), where the distance quantifies how different a pair of people are.

We must now specify a distance function (or \emph{cost function}) to operationalize the optimal transport mapping.
Although there are no easy answers to the question of which people are similar for the purpose of establishing discrimination, the goal of this paper is to demonstrate a new technique for finding evidence of possible discrimination rather than to present a conclusive definition of discrimination.
We find that the square of the $L_1$ distance leads to reliable results (Section~\ref{sec:application}), so we use it in this paper, but our approach is compatible with any cost function deemed suitable for the setting.

\begin{figure}
	\centering
	\resizebox{0.8\columnwidth}{!}{\includegraphics{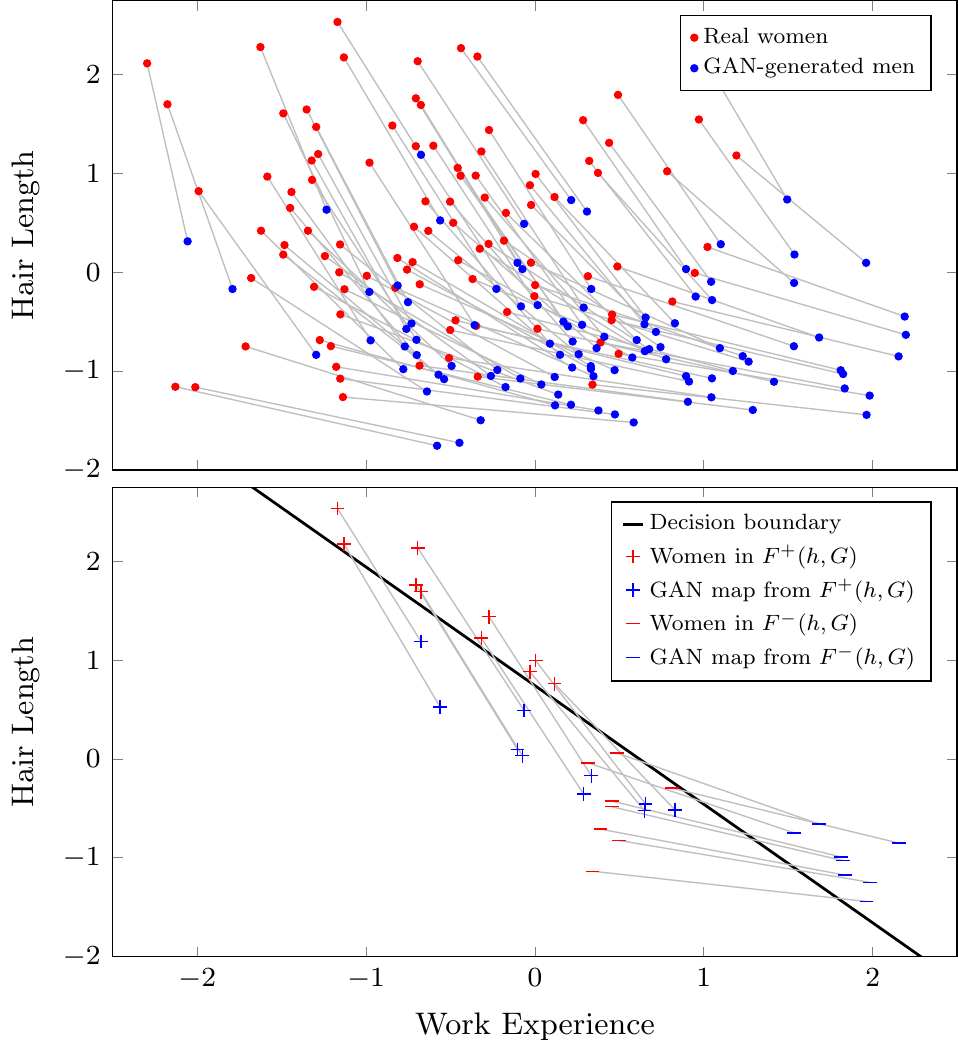}}
	\caption{
		(Top) Optimal transport mapping from women to men in the Lipton et al.~\cite{lipton2018does} synthetic dataset.
		This is an approximation generated by a GAN, as described in Section~\ref{sec:gan}.
		(Bottom) Flipsets as defined by the model with the given decision boundary.
	}
	\label{fig:lipton-scatter}
\end{figure}
We then analyze the optimal transport mapping, which is depicted in the top plot of Figure~\ref{fig:lipton-scatter}.
The analysis is through the \emph{flipset} \flipset, which consists of all women whose outcomes were different from their counterparts'.
We partition the flipset into the \emph{positive flipset} \posflip, which contains the hired women whose counterparts were not, and the \emph{negative flipset} \negflip, which is the set of rejected women whose counterparts were hired.
Thus, in some sense the women in \posflip were advantaged due to their gender, and those in \negflip disadvantaged.
Although this is not sufficient to establish that gender \emph{caused} the difference in the way that some causal tests~\cite{kusner2017counterfactual,datta2016algorithmic} can, FlipTest has the advantage that it queries the model on in-distribution points only, so its response to these inputs is likely to be reliable.

We begin by examining the size of the positive and negative flipsets.
Suppose that the model does not satisfy demographic parity, hiring disproportionately more men than women.
Then, the flipsets will have different sizes, with the negative flipset larger than the positive.
Therefore, a large difference in the sizes of the flipsets is evidence of possibly discriminatory behavior in the model.
However, such differences may also be based on a justifiable reason, such as when the job in question has requirements that are more likely to be satisfied by a particular gender.

Alternatively, it could be that the positive and negative flipsets have the same size.
If this is the case, i.e., the \emph{net} flipset size is zero, then the model satisfies demographic parity.
However, if the sizes of the individual flipsets are still large, then there may be discrimination at the subgroup level.
To investigate which subgroups may be discriminated against (or unfairly advantaged), we can compare the distributions of the flipsets to that of the entire population.
We plot the marginal distributions in Figure~\ref{fig:lipton}, and the results show that the advantaged (\posflip) women tend to have much longer hair than the disadvantaged (\negflip) women, suggesting that the model may be discriminating against shorter-haired women.

Finally, we can produce a \emph{transparency report} to gain more information about why the model may behave in this way.
The transparency report describes how the members of the flipsets are different from their counterparts, shedding light on which features may have contributed to the model's decision to classify the counterparts differently.
As we can see in the bottom plot of Figure~\ref{fig:lipton-scatter}, the women in the negative flipset have much less work experience than their counterparts, and this suggests that they were not hired because of inadequate work experience.
Although this method is not foolproof because work experience could have been a correlate of another feature that the model actually uses, it points to a specific aspect of model behavior for further investigation with tools such as QII~\cite{datta2016algorithmic} that can ascertain which feature is most responsible for the model's behavior.
If it turns out that work experience causes the difference in the hiring decisions---and in our example it does---a practitioner can consult a domain expert to decide whether the use of work experience is justified.
In many cases it would be justified, but it may not if the disparity in work experience is due to discrimination.

In the rest of this paper, we solve the main technical challenges behind FlipTest and experimentally verify that it reaches the correct conclusions in many settings.
We formally describe the optimal transport mapping in Section~\ref{sec:transport}, and we show how to use a GAN to efficiently approximate an optimal transport mapping in a way that generalizes to unseen data.
While the example in this section is based on demographic parity, we can split the dataset by the true label to test for a more individualized notion of equalized odds as well.
Our experiments in Sections~\ref{sec:ssl} and \ref{sec:lipton} show that FlipTest can identify possible subgroup discrimination in cases where the relevant group fairness objective (demographic parity or equalized odds) is met and in cases where it is not.
Finally, in Sections~\ref{sec:counterfactual} and \ref{sec:fairtest} we compare FlipTest to prior similar methods.

	\section{Optimal Transport Mapping}
	\label{sec:transport}

In this section, we describe the optimal transport problem in more detail, and show in Section~\ref{sec:gan} how to solve a GAN objective to approximate the optimal transport mapping in a way that generalizes to unseen data points.
In Section~\ref{sec:stability} we compare this GAN approximation to the exact optimal transport mapping and another approximation method by Seguy et al.~\cite{seguy2017large}, finding that the GAN tends to give more stable mappings.
Although this GAN approximation is not the only way to operationalize FlipTest, we use the GAN approximation throughout the rest of this paper because it appeared to give more reliable results than the alternatives we considered.

We first introduce the notation.
Let $\ds$ and $\ds'$ be two distributions defined over the feature space $\dx$.
In practice, we do not know these distributions, so we usually deal with observations of points drawn from these distributions instead.
We will use the sets $S = \{\boldsymbol{x}_1, \ldots, \boldsymbol{x}_n\}$ and $S' = \{\boldsymbol{x}_1', \ldots, \boldsymbol{x}_n'\}$ to denote the observed points, where $n = |S| = |S'|$. Note that here we assume that $|S| = |S'|$ for ease of exposition, as this assumption allows the resulting exact optimal transport mapping to be deterministic. The general case where $|S| \neq |S'|$ can be handled through the use of randomized optimal transport mappings, and our approximation methods do not require that the two sets have equal size.

Let $c: \dx \times \dx \to [0, \infty)$ be a cost function that describes the cost of moving between two points in the feature space $\dx$.
Intuitively, an optimal transport mapping from $S$ to $S'$ is a minimum-cost way to move the points in $S$ such that the end result is $S'$.
Thus, if more similar pairs of points have a lower cost, an optimal transport mapping describes how to match points in $S$ with their similar counterparts in $S'$.
Formally, an optimal transport mapping can be defined as a bijection $f: S \to S'$ that minimizes the expected cost $\E[c(\boldsymbol{x}, f(\boldsymbol{x}))] = \frac{1}{n} \sum_{i=1}^n c(\boldsymbol{x}_i, f(\boldsymbol{x}_i))$.

\subsection{Approximation via GANs}
\label{sec:gan}
While the exact optimal transport mapping between $S$ and $S'$ can be solved through a linear program or the Hungarian algorithm~\cite{kuhn1955hungarian}, these methods do not scale well to large $n$.
In addition, the exact mapping, as well as some approximations thereof~\cite{altschuler2017near, quanrud2018approximating}, is not defined for points outside of $S$.
Therefore, we instead propose a generative adversarial network (GAN)~\cite{goodfellow2014generative} to approximate an optimal transport mapping in a way that avoids both of these issues.

Because we want to use the generator $G$ as an optimal transport mapping, we assume that its inputs draw randomness from $\ds$.
For concreteness, we base our construction on the Wasserstein GAN~\cite{arjovsky2017wasserstein}, and note that our primary result (Proposition~\ref{thm:gan} below) can be extended to other types of GANs as well.
When training a conventional Wasserstein GAN with the sets of observed points $S$ and $S'$, the generator's loss function is $(1/n)\sum_{\boldsymbol{x} \in S} D(G(\boldsymbol{x}))$ for discriminator $D$, and the discriminator's loss function is $(1/n)\sum_{\boldsymbol{x}' \in S'} D(\boldsymbol{x}') - (1/n)\sum_{\boldsymbol{x} \in S} D(G(\boldsymbol{x}))$.
For the purpose of finding an optimal transport mapping, we modify the generator's loss function to take into account the cost of moving from a point in $S$ to a point in $S'$:
\begin{equation}
L_G = \textstyle \frac{1}{n} \sum_{\boldsymbol{x} \in S} D(G(\boldsymbol{x})) + \frac{\lambda}{n} \sum_{\boldsymbol{x} \in S} c(\boldsymbol{x}, G(\boldsymbol{x})) \label{eqn:loss}
\end{equation}

Our modified generator has two objectives, with the parameter $\lambda$ controlling their relative importance: generating the correct output distribution $\ds'$, and minimizing the expected cost $c(\boldsymbol{x}, G(\boldsymbol{x}))$.
Proposition~\ref{thm:gan} formalizes the intuition that these objectives are also those of an optimal transport mapping.
The proof is given in
\ifarxiv
Appendix~\ref{app:proof}.
\else
Appendix~\ref{app:proof} of the supplementary material.
\fi
\begin{proposition}
\label{thm:gan}
Suppose that $G^*$ is a minimizer of $L_G$ among all $G$ such that $G(S) = S'$.
If $\lambda > 0$, $G^*$ is an exact optimal transport mapping from $S$ to $S'$.
\end{proposition}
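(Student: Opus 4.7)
The plan is to exploit the hard constraint $G(S) = S'$ to show that the discriminator-dependent term of $L_G$ is constant across all feasible $G$, so that minimizing $L_G$ reduces to minimizing the transport cost, which is exactly the defining property of an optimal transport mapping.

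First, I would observe that since $|S| = |S'| = n$ and $G(S) = S'$ (as multisets), the restriction of $G$ to $S$ must be a bijection from $S$ onto $S'$. Consequently, summing any fixed function over the image is insensitive to which bijection is chosen:
\begin{equation*}
\frac{1}{n} \sum_{\boldsymbol{x} \in S} D(G(\boldsymbol{x})) \;=\; \frac{1}{n} \sum_{\boldsymbol{x}' \in S'} D(\boldsymbol{x}'),
\end{equation*}
and the right-hand side depends only on $D$ and $S'$, not on the particular feasible $G$. This is the key structural observation.

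Next, I would use this to decompose the loss. For any feasible $G$,
\begin{equation*}
L_G \;=\; \underbrace{\frac{1}{n} \sum_{\boldsymbol{x}' \in S'} D(\boldsymbol{x}')}_{\text{constant in } G} \;+\; \frac{\lambda}{n} \sum_{\boldsymbol{x} \in S} c(\boldsymbol{x}, G(\boldsymbol{x})).
\end{equation*}
Therefore, for $\lambda > 0$, a minimizer $G^*$ of $L_G$ subject to $G(S) = S'$ is precisely a minimizer of $\frac{1}{n}\sum_{\boldsymbol{x} \in S} c(\boldsymbol{x}, G(\boldsymbol{x})) = \mathbb{E}[c(\boldsymbol{x}, G(\boldsymbol{x}))]$ over bijections $f: S \to S'$. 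By the definition given just before the proposition, this is exactly an exact optimal transport mapping from $S$ to $S'$.

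There is no real obstacle here, just a bookkeeping point worth being careful about: one must verify that the constraint $G(S) = S'$ really does force a bijection (which uses $|S| = |S'|$), and that the discriminator term truly becomes a constant rather than merely a bounded quantity. If one wanted to extend the statement to the case $|S| \neq |S'|$ with randomized (soft) couplings, the same argument would go through after replacing sums with expectations under the coupling and noting that the marginal constraint on $S'$ fixes $\mathbb{E}_{\boldsymbol{x}'\sim G_\# \ds}[D(\boldsymbol{x}')]$; but that generalization is not claimed in the proposition as stated.
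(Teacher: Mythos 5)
Your proof is correct and follows essentially the same argument as the paper's: use the constraint $G(S) = S'$ to rewrite the discriminator term as a sum over $S'$ that is constant in $G$, so that minimizing $L_G$ over feasible $G$ reduces (for $\lambda > 0$) to minimizing the average cost, which is the definition of an exact optimal transport mapping. Your added remarks on verifying the bijection and on the unbalanced case are sensible but not needed for the statement as given.
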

Although the generator $G$ will not satisfy $G(S) = S'$ in practice, Proposition~\ref{thm:gan} motivates the use of this generator to approximate an optimal transport mapping.
Our experimental results (Section~\ref{sec:stability}) show that the approximate GAN mapping is more stable than the exact mapping, which is not very stable and changes drastically depending on which sets $S$ and $S'$ were drawn from $\ds$ and $\ds'$.

\subsection{Stability}
\label{sec:stability}
\begin{figure}
\resizebox{0.8\columnwidth}{!}{\includegraphics{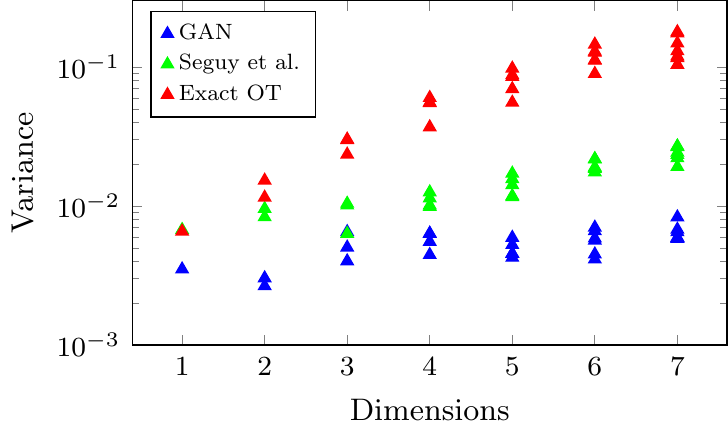}}
\caption{Variance of the GAN (blue), Seguy et al.~\cite{seguy2017large} (green), and exact (red) optimal transport mappings over the different random draws of the observed points $S$ and $S'$.
Note the logarithmic vertical scale.
The horizontal axis represents the number of dimensions in the feature space $\dx$, and each plotted point represents the variance of one of the features.
More details are given in Section~\ref{sec:stability}.}
\label{fig:stability}
\end{figure}
Here, we compare the behavior of the exact optimal transport mapping to those of approximate mappings.
This is not intended to be a comprehensive evaluation of all approximation methods, but rather an argument for the use of approximations over the exact mapping for our purpose.
We note that FlipTest is compatible with any optimal transport method.

To measure stability, we fix a point $\boldsymbol{x} \in S$ and then draw multiple distinct samples of the other $n-1$ points from $\ds$ and $n$ points from $\ds'$.
Thus, we have different sampled sets $S$ and $S'$ each time, and we observe the variance of the point $f(\boldsymbol{x})$ over the random draws.

As with all experiments in this paper, we used the square of the $L_1$ distance as the cost function.
The exact optimal transport mapping was computed as a linear program with Gurobi~\cite{gurobi} implemented in Python~3, and for the GAN approximation (Section~\ref{sec:gan}), we trained a Wasserstein GAN~\cite{arjovsky2017wasserstein} using Keras~\cite{keras} with the TensorFlow backend~\cite{tensorflow}.
For these experiments, we mapped the standard multivariate normal distribution to itself.
Because the size of the linear program for the exact optimal transport mapping increases at least quadratically with the size $n$ of the dataset, we used $n = 500$. 
Each experiment was repeated with 100 
random draws of the dataset.

In the first set of experiments, we set $\boldsymbol{x}$ to be the one vector and observed the mean of $f(\boldsymbol{x})$.
Since we map a distribution to itself, $f$ should roughly be the identity function, and the mean of $f(\boldsymbol{x})$ should be similar to $\boldsymbol{x}$.
While this was the case for the GAN approximation, the exact mapping displayed a significant ``regression-to-the-mean'' effect that increased with the number of dimensions in the feature space.

In the second set of experiments, we set $\boldsymbol{x}$ to the zero vector and noted the variance of $f(\boldsymbol{x})$ under all three types of mappings.
The result is plotted in Figure~\ref{fig:stability}, showing that the GAN approximation is much more stable than the exact mapping and somewhat more stable than that obtained by the method described by Seguy et al.~\cite{seguy2017large}.

The differences in both mean and variance persisted when we changed the data distribution by making the features correlated with each other.
These differences can be explained by the fact that the exact mapping tends to overfit to the observed points, since it has to map every point to another observed point.
As a result, approximate mappings are better suited for evaluating the fairness of a model that is trained to generalize.
Since the GAN mapping appears to be more stable than that of Seguy et al., for the rest of the experiments we will exclusively use GANs as the optimal transport method in FlipTest.
At the time of writing, we have not been able to evaluate the very recent GAN-based optimal transport approximation method by Leygonie et al.~\cite{leygonie2019adversarial} for use in FlipTest, but any advantage of their method over the construction given here would translate to an improvement for FlipTest's results.

	\section{Flipsets and Transparency Reports}
	\label{sec:fliptr}
We leverage the optimal transport mapping to gather two main pieces of information from a model: who may experience discrimination, and which features may be associated with this effect.
In Section~\ref{sec:flipset}, we describe \emph{flipsets}, which we use to answer the first question, and in Section~\ref{sec:tr} we show how to use them to construct \emph{transparency reports}, which help answer the second question.

\subsection{Flipsets}
\label{sec:flipset}
We begin by introducing the \emph{flipset} (Definition~\ref{def:flipset}), which is the set of points whose image under a transport mapping is assigned a different label by a binary classifier.

\begin{definition}[Flipset]
\label{def:flipset}
Let $h: \dx \to \{0,1\}$ be a classifier and $G: \ds \to \ds'$ be an optimal transport mapping (or an approximation).
The flipset \flipset is the set of points in $S$ whose mapping into $\ds'$ under $G$ changes classification.
\begin{equation}
\label{eqn:flipset}
\flipset = \{\boldsymbol{x} \in S \mid h(\boldsymbol{x}) \ne h(G(\boldsymbol{x}))\}
\end{equation}
The \emph{positive} and \emph{negative} partitions of \flipset are denoted by \posflip and \negflip.
\begin{align*}
\posflip &= \{\boldsymbol{x} \in S \mid h(\boldsymbol{x}) > h(G(\boldsymbol{x}))\} \\
\negflip &= \{\boldsymbol{x} \in S \mid h(\boldsymbol{x}) < h(G(\boldsymbol{x}))\}
\end{align*}
\end{definition}
In our experiments, $\ds$ and $\ds'$ will correspond to two groups with differing values for a protected attribute, and $h$ will be a classifier with the potential to be unfair.
For example, suppose that $\ds$ and $\ds'$ respectively correspond to female and male job applicants and that $h$ is used to determine which applicants should proceed to further rounds of interview.
Then \posflip is the set of female applicants who proceed to the next round but whose male counterparts under $G$ do not, and \negflip is the women who do not proceed but whose male counterparts do.

Note that we can also create flipsets based on a mapping $G' : \ds' \to \ds$ in the opposite direction.
Then, in our example \posflipp is the set of male applicants who proceed to the next round but whose female counterparts under $G'$ do not, and \negflipp is the men who do not proceed but whose female counterparts do.
If $G'$ and $G$ compose to the identity function, these flipsets would have the same size as \negflip and \posflip, respectively, and we can test for this property as a sanity check of our GAN mappings.
We expand this discussion in
\ifarxiv
Appendix~\ref{app:backwards}.
\else
Appendix~\ref{app:backwards} of the supplementary material.
\fi

If the distributions $\ds$ and $\ds'$ are equal, we would expect $G$ to be the identity function, leading to empty positive and negative flipsets.
This corresponds to the setting where the input features are independent of the protected attribute, thereby ensuring that the model cannot discriminate on the basis of the protected attribute.
Proposition~\ref{thm:demparity} shows that demographic parity only provides a weaker guarantee of zero \emph{net} flipset size.
Thus, if the positive and negative flipsets are nonempty but have equal size, some individuals may be experiencing discrimination even though demographic parity holds.
In this proposition, we use the exact optimal transport mapping to avoid any noise introduced by the GAN approximation, and the proof is provided in
\ifarxiv
Appendix~\ref{app:proof}.
\else
Appendix~\ref{app:proof} of the supplementary material.
\fi

\begin{proposition} \label{thm:demparity}
Let $h$ be a binary classifier and $G: S \to S'$, with $|S| = |S'|$, be the exact optimal transport mapping.
Then, $|\posflip| = |\negflip|$ if and only if the model satisfies demographic parity on the observed points, i.e.,
\[ |\{\boldsymbol{x} \in S \mid h(\boldsymbol{x}) = 1 \}| = |\{\boldsymbol{x}' \in S' \mid h(\boldsymbol{x}') = 1 \}|. \]
\end{proposition}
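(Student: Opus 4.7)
The plan is to reduce the biconditional to a simple counting identity, using the fact that when $|S|=|S'|$, an exact optimal transport mapping $G$ is necessarily a bijection from $S$ onto $S'$. This bijectivity is the key structural fact that makes the proof essentially bookkeeping.

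First I would set up notation. Let $P = |\{\boldsymbol{x} \in S : h(\boldsymbol{x}) = 1\}|$ and $P' = |\{\boldsymbol{x}' \in S' : h(\boldsymbol{x}') = 1\}|$; demographic parity on the observed points is exactly the statement $P = P'$. Next, partition $S$ into four classes according to the values of the pair $(h(\boldsymbol{x}), h(G(\boldsymbol{x}))) \in \{0,1\}^2$, and let $n_{ab} = |\{\boldsymbol{x} \in S : h(\boldsymbol{x}) = a,\ h(G(\boldsymbol{x})) = b\}|$. By Definition~\ref{def:flipset}, $|\posflip| = n_{10}$ and $|\negflip| = n_{01}$.

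The core step is to express $P$ and $P'$ in terms of these four counts. Clearly $P = n_{10} + n_{11}$, since these are the $\boldsymbol{x} \in S$ with $h(\boldsymbol{x}) = 1$. For $P'$, I would use that $G$ is a bijection: every $\boldsymbol{x}' \in S'$ can be written uniquely as $G(\boldsymbol{x})$ for some $\boldsymbol{x} \in S$, so the count of $\boldsymbol{x}' \in S'$ with $h(\boldsymbol{x}') = 1$ equals the count of $\boldsymbol{x} \in S$ with $h(G(\boldsymbol{x})) = 1$, which is $n_{01} + n_{11}$. Subtracting gives $P - P' = n_{10} - n_{01} = |\posflip| - |\negflip|$, from which both directions of the biconditional follow immediately.

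The only real subtlety is justifying that $G$ is a bijection on the finite sets $S$ and $S'$. The statement of the proposition constrains $G$ to be an exact optimal transport mapping and invokes the setting of Section~\ref{sec:transport}, where such maps are defined as bijections $f: S \to S'$ under the assumption $|S| = |S'|$; I would just cite that definition rather than reprove it. There is no hard technical obstacle here—the result is essentially an immediate consequence of counting positive labels on both sides of a bijection, and the proposition's role is conceptual (making precise the sense in which flipset balance is a strictly weaker condition than feature-level independence from the protected attribute) rather than technical.
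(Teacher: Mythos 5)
Your proof is correct and is essentially the same argument as the paper's: both reduce $|\posflip| - |\negflip|$ to $|\{\boldsymbol{x} \in S \mid h(\boldsymbol{x})=1\}| - |\{\boldsymbol{x} \in S \mid h(G(\boldsymbol{x}))=1\}|$ by cancelling the common count of points with $h(\boldsymbol{x}) = h(G(\boldsymbol{x})) = 1$ (your $n_{11}$), and then use the bijectivity of $G$ to identify the second quantity with $|\{\boldsymbol{x}' \in S' \mid h(\boldsymbol{x}')=1\}|$. The four-way partition is just a slightly more explicit packaging of the paper's two subtraction identities.
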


If we instead consider the distributions $\ds|(y=1)$ and $\ds'|(y=1)$, conditioned by the true label $y$, we can prove a similar result about equality of opportunity~\cite{hardt2016equality}, and if we consider $\ds|(y=0)$ and $\ds'|(y=0)$ as well, we can extend the result to equalized odds~\cite{hardt2016equality}.

When $h$ is biased, the flipsets can provide several additional forms useful information about the model's behavior.
First, the relative sizes of \posflip and \negflip can serve as a simple test of group fairness.
Second, the absolute sizes of \posflip and \negflip, if they are large, can indicate possible discrimination at the subgroup level.
Third, if the distributions of the flipsets are different from \ds, we gain information about \emph{which} subgroup may be discriminated against.
We illustrate these insights in greater detail in the case studies described in Section~\ref{sec:application}.

\subsection{Transparency Reports}
\label{sec:tr}

A \emph{transparency report} (Definition~\ref{def:transparency}) identifies features that change the most, and most consistently, under $G$ between members of a given flipset (\posflip or \negflip).
These features are likely candidates for the underlying reasons for the observed discrimination, and can be examined further using more costly causal influence methods~\cite{datta2016algorithmic} to make a final determination.

\begin{definition}[Transparency Report]
\label{def:transparency}
Let $h: \dx \to \{0,1\}$ be a classifier, $G: \ds \to \ds'$ be an optimal transport mapping (or approximation), and \flipset be the corresponding flipset.
If $\dx \subseteq \R^d$, we can compute the following vectors, each of whose coordinate corresponds to a feature in $\dx$:
\begin{align*}
& \textstyle \frac{1}{|\starflip|} \sum_{\boldsymbol{x} \in \starflip} \boldsymbol{x} - G(\boldsymbol{x}), \text{ and} \\
& \textstyle \frac{1}{|\starflip|} \sum_{\boldsymbol{x} \in \starflip} \mathrm{sign}(\boldsymbol{x} - G(\boldsymbol{x}))
\end{align*}
Here, $\star \in \{+, -\}$.
Together, these vectors define a \emph{transparency report}, which consists of two rankings of the features in $\dx$, each sorted by the absolute value of each coordinate.
\end{definition}

Intuitively, the features ranked highest by the transparency report are those that are most associated with the model's differences in behavior on the flipset.
As we show in Sections~\ref{sec:ssl} and \ref{sec:lipton}, these often align closely in practice with the features used by the model to discriminate.

	\section{Experiments}
	\label{sec:application}

We now apply FlipTest to real and synthetic datasets, illustrating its use in finding discrimination in models.
We begin by providing additional validation of the GAN optimal transport approximation (Section~\ref{sec:validation}), and then move on to two case studies: a biased predictive policing model (Section~\ref{sec:ssl}), a well as a hiring model (Section~\ref{sec:lipton}) that contains a subtle form of subgroup discrimination.
In Sections~\ref{sec:counterfactual} and \ref{sec:fairtest}, we compare FlipTest to prior fairness testing methods, namely counterfactual fairness-based auditing~\cite{kusner2017counterfactual} and FairTest~\cite{tramer2017fairtest}.
In all of the experiments, we trained GANs using the configuration described in Section~\ref{sec:transport}.

\subsection{GAN Validation}
\label{sec:validation}
\begin{table}
\caption{Validations for GANs included in the experiments section. KS refers to the Kolmogorov--Smirnov two-sample test statistic. MSE Diff refers to the difference, between real and generated data, of the mean squared error of a linear regression model trained on the real data to predict a feature from the rest. For KS and MSE, we ran 10 trials; the mean is given first, with standard deviation in parentheses. Dist: OT, GAN refers to the average of the squared $L_1$ distance between a data point $x$ and its counterpart $G(x)$ under an exact optimal transport mapping (OT) or an approximated GAN mapping (GAN).}
\label{tbl:validations}
\centering
\resizebox{\columnwidth}{!}{%
\begin{tabular}{llrrr}
Experiment &  Features & KS (std) & MSE Diff (std) & Dist: OT, GAN\\\hline
\multirow{3}{*}{\shortstack[l]{SSL: \\ Dem Parity}}
           & Age       & $0.054$ $(0.001)$ &  $0.070$ $(0.048)$ & \multirow{3}{*}{$2.04$, $1.64$} \\
           & Gang Aff  & $0.018$ $(0.006)$ &  $0.094$ $(0.034)$ & \\
           & Narc Arr  & $0.025$ $(0.002)$ &  $0.298$ $(0.058)$  &\\ \hline
\multirow{3}{*}{\shortstack[l]{SSL: \\ Eq Odds (Neg)}}
           & Age       & $0.019$ $(0.021)$ &  $0.019$ $(0.057)$ & \multirow{3}{*}{$1.52$, $1.42$} \\
           & Gang Aff  & $0.007$ $(0.004)$ & $-0.009$ $(0.053)$ & \\
           & Narc Arr  & $0.019$ $(0.007)$ &  $0.094$ $(0.078)$ & \\ \hline
\multirow{2}{*}{\shortstack[l]{Lipton: \\ Dem Parity}}
           & Work Exp  & $0.072$ $(0.014)$ &  $0.150$ $(0.204)$ & \multirow{2}{*}{$2.19$, $2.09$} \\
           & Hair Len  & $0.074$ $(0.043)$ &  $0.141$ $(0.114)$ \\ \hline
\multirow{2}{*}{\shortstack[l]{Law School}}
           & LSAT      & $0.057$ $(0.012)$ &  $3.757$ $(0.462)$ & \multirow{2}{*}{$10.99$, $11.10$} \\
           & GPA       & $0.110$ $(0.027)$ & $-0.040$ $(0.005)$ &
\end{tabular}
}
\end{table}

In general, the GAN does not converge to the target distribution.
Moreover, limitations in the amount of data available to train the GAN will reduce the accuracy of the approximation.
To evaluate the effect of these factors on flipsets, we trained a GAN with samples from two identical distributions.
In this setting, as the sample size approaches infinity, the exact optimal transport mapping will lead to empty flipsets.
Therefore, the results of this experiment would indicate how many flips we can expect due to the noise in the GAN approximation and the finite sample size.

Because we map a distribution to itself, in order to simulate a more typical application of GAN training, we added additional random features that are \emph{dependent} on the protected attribute.
We drew 10,000 points drawn from each distribution, and to amplify any errors in the GAN mapping, we trained a very complex model by fitting an SVM with RBF kernel to random labels.
Further details on the experimental procedure are given in
\ifarxiv
Appendix~\ref{app:control}.
\else
Appendix~\ref{app:control} of the supplementary material.
\fi
In the end, the flipsets were small despite the above steps that were intended to increase the size of the flipsets: we observed $|\posflip| = 167$ and $|\negflip| = 148$, each accounting for approximately 3\% of the data.
This serves a benchmark for comparison with the models in the following sections, which have larger and more unbalanced flipsets.
This difference is striking given that we would expect larger flipsets from the complicated SVM classifier here than the simpler models in the later experiments.

To further validate our GAN mappings, we computed the exact optimal transport mapping $f$ on a 2,000-member subset of the data and computed the average squared $L_1$ distance between $x$ and $f(x)$. Then, we compared this quantity to the average squared $L_1$ distance between $x$ and $G(x)$ for the GAN generator $G$. If the GAN mapping closely matches the optimal transport mapping, we would expect these numbers to be similar.

We also examined the fit of the GAN-based approximation on the datasets used in the evaluation using the Kolmogorov--Smirnov (KS) two-sample test on the marginal distributions for each feature between the real target data $S'$ and the generated data $G(S)$.
The output of this test corresponds to the largest difference in the empirical distribution functions of the two samples, with a small KS-statistic corresponding to the case where the two distributions are similar.
In Table~\ref{tbl:validations}, we only report the KS-statistic and its variance, and not the associated p-value, since we do not require or expect the two samples to be from the exact same distribution.
We instead use the statistic as a metric to judge how far apart the distributions are, aiming for them to be as close as possible.

Since similarity tests on the marginals of a distribution do not account for correlation between features, we further validate the output by training a linear regression model to predict each feature from the others (e.g., in the SSL dataset, age from the other seven features).
We train these regression models on the real target data $S'$, and compare the accuracies of these models on $S'$ to those on the generated data $G(S)$.
If the mean squared error is similar between predicting all features from the true data and the generated data, we take this as evidence that the GAN has captured correlation between the features well.
For each of the experiments in the following sections, the validation results are also reported in Table~\ref{tbl:validations}. The KS and MSE statistics are reported over 10 trials.



\subsection{Testing a Biased Model}
\label{sec:ssl}

The Chicago Strategic Subject List (SSL) dataset~\cite{chicago} consists of arrest data collected in Chicago for the purpose of identifying which individuals are likely to be involved in a violent crime, either as a victim or a perpetrator. 
We used the following eight features that are also used by the SSL model: number of times as victim of a shooting incident, age during last arrest, number of times as victim of aggravated battery or assault, number of prior arrests for violent offenses, gang affiliation, number of prior narcotic arrests, trend in recent criminal activity and number of prior unlawful use of weapon arrests.
The target feature corresponds to the risk of being involved in a shooting, ranging in value from 0 to 500 (low to high risk).
We normalized all input features to zero mean and unit variance, and a standard least-squares regression model for this data primarily relies on age ($w_{\it age} \approx -50$), giving the remaining features coefficients of magnitude less than 10.

We create a classification task from this dataset by setting a threshold on score (345); 10\% of the dataset has a score above this threshold.
As prior work~\cite{yeom2018hunting} has shown that models trained on this data do not appear to exhibit significant bias, we deliberately bias the classifier by making it rely more heavily on the number of prior narcotics arrests, which is correlated with race ($r = 0.12$) and is arguably less predictive of involvement in violent crime than, say, number of prior arrests for violent offenses.
The resulting linear classifier exclusively uses age and narcotics arrests, classifying a person as high risk if and only if $-53 \cdot \mathit{age} + 25 \cdot \mathit{narc} > 65$.
As a result, the model is expected to assign black subjects higher average scores without necessarily being accurate.

We perform two tests on this model:
in Section~\ref{sec:ssl_dem}, we map between all black and white subjects, testing for criteria related to demographic parity, and in Section~\ref{sec:ssl_eq}, we construct one map between the ground-truth positive (high-risk) black and white subjects, and another map between the ground-truth negative (low-risk) subjects, testing for criteria related to equalized odds.

\subsubsection{FlipTest Demographic Parity}
\label{sec:ssl_dem}

\begin{figure}
\centering
\resizebox{\columnwidth}{!}{
\includegraphics{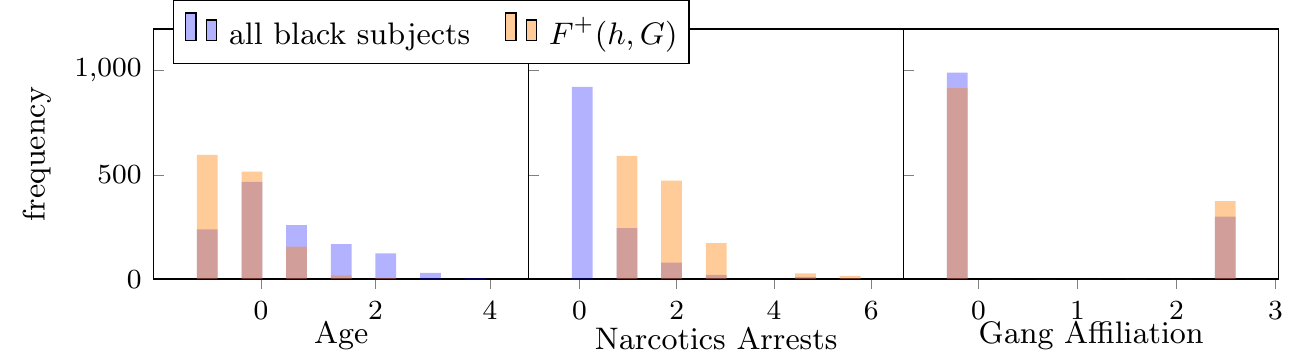}}
\caption{Distribution of the positive flipset for a black-to-white mapping (orange) and the overall black subject population (purple). On the x-axis we have (normalized) feature values; y-axis is number of individuals. We did not plot the flipset in the other direction, as it only has 4 individuals.}
\label{fig:SSL1}
\end{figure}

Additional details on experimental setup are found in
\ifarxiv
Appendix~\ref{app:experimental-setup}.
\else
Appendix~\ref{app:experimental-setup} of the supplementary material.
\fi

The positive flipset \posflip shows that 1,290 black subjects that are marked by the model as high risk are marked as low risk when sent through the black-to-white mapping (out of 3,683 black subjects marked as high risk). 
On the other hand, the negative flipset \negflip consists of only 4 people, which is an extremely small fraction of the 37,877 black subjects marked as low risk.
The size of \posflip and the significant asymmetry of the flipsets suggest that the model discriminates on the basis of race,
which is consistent with how the model was constructed.

To gain more insight about which subgroups may be discriminated against, we investigate the distributions of the flipsets as compared to that of the general black population.
The histograms of the marginal distributions of select features is given in Figure~\ref{fig:SSL1}, with the rest of the features plotted in Figure~\ref{fig:SSL1.2} in the
\ifarxiv
appendix.
\else
supplementary material.
\fi
For age and narcotic arrests, the flipset subjects skew away from the full population, towards younger people with more narcotic arrests.
This suggests the bias of the model affects younger people with more narcotic arrests most.
On the other hand, the marginals largely overlap for gang affiliation, which the model does not directly use.

We then look at the transparency report (Figure~\ref{fig:SSL1.2} in the
\ifarxiv
appendix)
\else
supplementary material)
\fi
to learn which features may be responsible for this apparently biased behavior.
The black subjects in \posflip changed the most, and most consistently, in narcotics arrests under the GAN mapping, which is again consistent with the type of bias that we introduced into this model.
Thus, although the transparency reports only provide direct insight into statistical correlations between the features and model behavior, in this case they identified precisely the feature that is responsible for the model's bias.
In general, based on the information from the transparency report, an practitioner testing a model can decide whether they would like to investigate the model further for bias based on the features in the transparency report.
A more definitive evidence, such as that given by QII~\cite{datta2016algorithmic}, would be required to conclude that the feature causes the model's discriminatory behavior.

In Appendix~\ref{app:ssl} and
\ifarxiv
Figure~\ref{fig:SSL2},
\else
Figure~\ref{fig:SSL2} in the supplementary material,
\fi
we present an additional set of experiments on this data using demographic parity, using a model that discriminates against black subjects based on a proxy involving several correlated features.

\subsubsection{FlipTest Equalized Odds}
\label{sec:ssl_eq}

\begin{figure}
\centering
\resizebox{\columnwidth}{!}{
\includegraphics{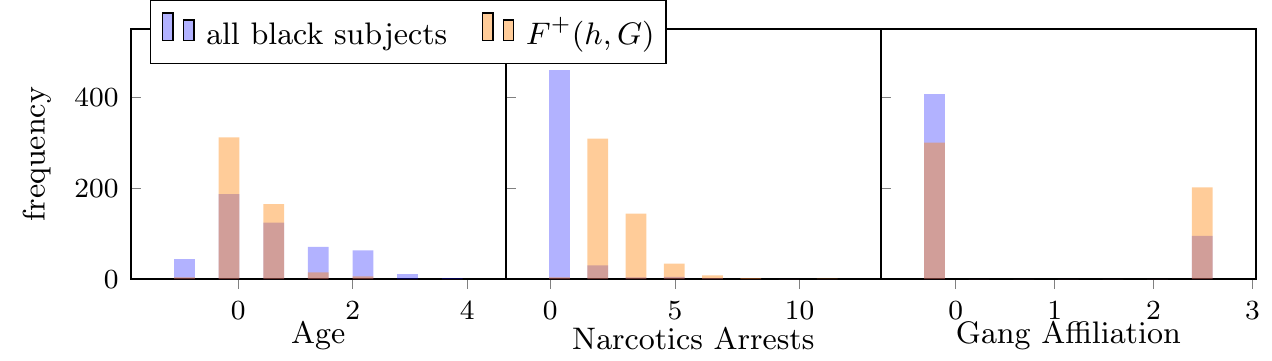}}
\caption{Distribution of the positive flipset for a black-to-white mapping of ground-truth negative subjects (orange) and the overall black ground-truth negative population (purple). The x-axis displays (normalized) feature values, and the y-axis displays frequency.}
\label{fig:SSL_eq_odd}
\end{figure}

We now analyze the SSL model, targeting a somewhat different fairness criteria related to equalized odds.
We do so by training two optimal transport approximations: one to map only black ground-truth negatives to white ground-truth negatives, and one to map only black ground-truth positives to white ground-truth positives.
Both GANS were trained with $\lambda=10^{-4}$.
Additional experimental details are given in
\ifarxiv
Appendix~\ref{app:experimental-setup}.
\else
Appendix~\ref{app:experimental-setup} of the supplementary material.
\fi

When mapping ground-truth negatives, we find that \posflip contains 499 individuals, out of 5,002 ground-truth negative black subjects marked as high risk by the model.
Meanwhile, \negflip was empty. 
Prompted by the size and asymmetry of the flipsets, we compare the marginals of \posflip compared to those of the overall black ground-truth negative population (Figure~\ref{fig:SSL_eq_odd}) to identify the subpopulation most affected by this potential discrimination.

\posflip largely consists of younger subjects who are much more likely to have narcotics arrests in their record, and be a part of a gang, but have a lower trend in recent criminal activity.
There is not much difference in the rest of the features.
The complete histograms of the marginals and the transparency report are given in Figure~\ref{fig:SSL_eq_app} in the
\ifarxiv
appendix.
\else
supplementary material.
\fi
Figure~\ref{fig:SSL_eq_app} also contains the transparency report, which has narcotics arrests at the top of both lists, i.e., narcotics arrests is both the largest average change and the most consistent change under the GAN mapping for individuals in \posflip. Thus, the transparency report correctly identifies the source of bias that was injected into the model.

When mapping ground-truth positives, we find that \posflip contains 216 individuals out of 1,568 black ground-truth positive subjects labeled as high risk, and that \negflip has 102 individuals out of 3,767 black ground-truth positive subjects labeled as low risk.
The transparency report for \posflip lists narcotics arrests as one of its main features, correctly identifying this source of bias.

The transparency report for \negflip lists a positive change in age as one of the most consistent features. This suggests that some older black subjects were mapped to younger white subjects, and since the model relies most heavily on age, the white counterparts were marked as high risk while the black individuals were marked as low risk.
Since the original SSL model, which generated the ground-truth labels, also relies heavily on age, the 629 ground-truth positive white subjects were all quite young, while the age distribution of the 5,335 ground-truth positive black subjects had a larger spread.
Thus, the transparency report on \negflip is a result of optimal transport, which inherently tries to match the generated distribution to the target distribution.
Practitioners should be cognizant of this distribution-matching behavior when mapping between starkly different distributions, as the model may be justified in using some features that are associated with the protected attribute.
This flipset points out the model's heavy reliance on age, and a domain expert may weigh in on whether or not this is justifiable.
Full results are available in Figure~\ref{fig:SSL_eq_app2} in the
\ifarxiv
appendix.
\else
supplementary material.
\fi

\subsection{Testing a Group-Fair Model}
\label{sec:lipton}
\begin{figure}
\centering
\resizebox{\columnwidth}{!}{\includegraphics{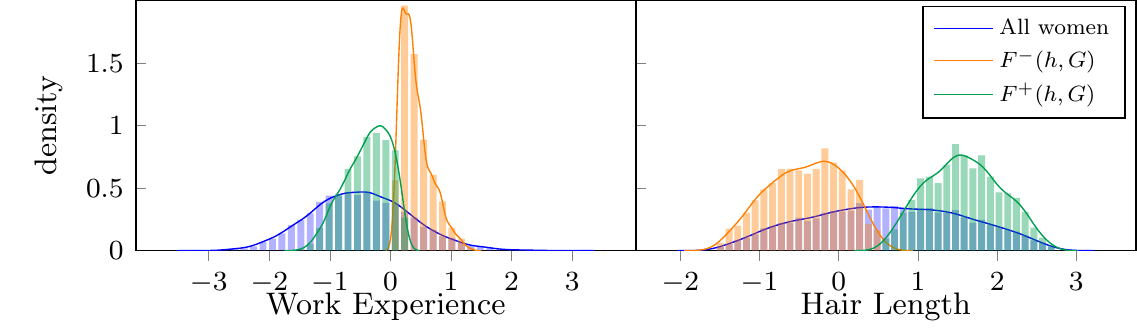}}
\caption{Distribution of women in the hiring data generated by Lipton et al.~\cite{lipton2018does} (Section~\ref{sec:lipton}). The green distribution is that of the women who were hired while their male statistical counterparts were not, and the orange distribution represents those who were not hired while their male statistical counterparts were, according to a model of Zafar et al.~\cite{zafar2017fairness-aistats} that seeks to equalize hiring rates across genders.}
\label{fig:lipton}
\end{figure}
Previously, Lipton et al.~\cite{lipton2018does} argued that some fair learning algorithms employ a ``problematic within-class discrimination mechanism''.
To support this argument, they create a synthetic data distribution that consists of two features: work experience and hair length.
They use this distribution to train a learning algorithm by Zafar et al.~\cite{zafar2017fairness-aistats} that seeks to equalize hiring rates across genders.
They then find that the resulting linear model uses hair length as a proxy for gender, thereby unfairly benefiting long-haired men and harming short-haired women.
In this section, we replicate this experimental setting to demonstrate that our method can detect unfair behavior in a model that appears to be fair at the population level.

We trained a linear model on 10,000 men and 10,000 women drawn from the male and female distributions, respectively, after scaling each feature to zero mean and unit variance.
Then, we trained approximate optimal transport mappings ($\lambda = 10^{-4}$) in both directions and evaluated the fairness of the model on a test set of 10,000 men and 10,000 women drawn from the same distributions.
Here we present the women-to-men mapping, and the results on the men-to-women mapping is given in
\ifarxiv
Appendix~\ref{app:backwards}.
\else
Appendix~\ref{app:backwards} of the supplementary material.
\fi

At the population level, the model treated the two groups similarly, hiring 27\% of the men and 30\% of the women.
However, when we mapped the women to the men 715 women were disadvantaged by the model (rejected with hired male counterpart) whereas 1215 were advantaged (hired with rejected male counterpart).
These flipsets comprise a much larger portion of the population than those encountered in Section~\ref{sec:validation}, suggesting some discriminatory behavior at the subgroup level.
Looking more closely at distributions of these flipsets (Figure~\ref{fig:lipton}) provides insight into the subgroups experiencing discrimination: the disadvantaged women tend to have shorter hair and longer work experience than the average woman, and the advantaged women have the opposite characteristics.
This is consistent with the observation of Lipton et al.~\cite{lipton2018does} that the model penalizes people with a masculine characteristic (short hair) in order to equalize hiring rates.

In addition, the transparency report shows that the disadvantaged women have much less (1.3 standard deviations) work experience and slightly longer (0.5 SD) hair than their counterparts, while the advantaged women have slightly less (0.6 SD) work experience and much longer (1.6 SD) hair than their counterparts.
This suggests that the disadvantaged women are disadvantaged due to their short work experience and that the advantaged women are advantaged due to their hair length.
We note that in general, this is not sufficient to establish \emph{causal} claims about why the model behaves this way; for example, the difference in hair length may be due to the model's use of some other feature that is correlated with hair length.
In this case, however, the model's weights---which define its causal behavior---and the result of the transparency report agree.
The model, which is a linear regressor, weights the two features almost equally ($w_{\it hair}=1.4$, $w_{\it work}=1.2$).
Since work experience, but not hair length, is a legitimate factor in most hiring decisions, after this deeper investigation into the model we may be able to conclude that this apparently fair model in fact discriminates against some men.


\subsection{Comparison with Counterfactual Fairness}
\label{sec:counterfactual}

\begin{figure}
\centering
\resizebox{\columnwidth}{!}{
\includegraphics{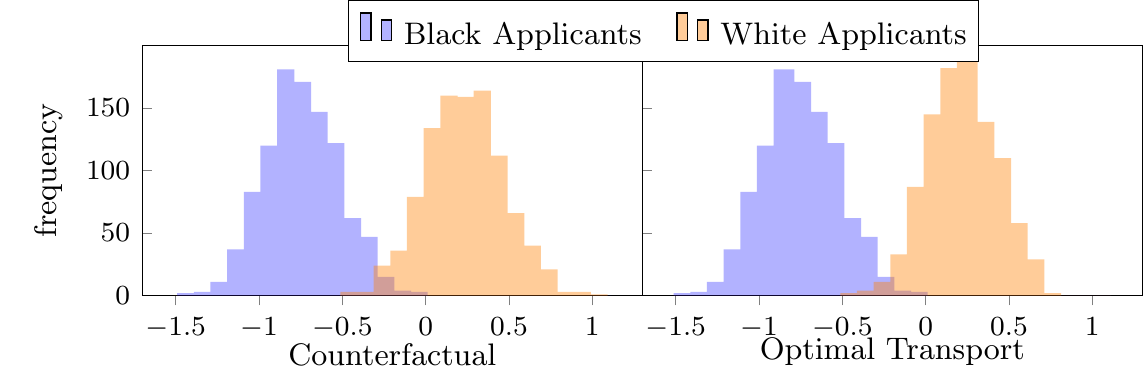}
}
\caption{Results of regression model predicting first year law school success on a) counterfactuals generated with a causal model and b) alternate inputs generated with optimal transport. The x-axis is predicted first year average at law school, and the y-axis is frequency.}
\label{fig:count1}
\end{figure}

\begin{figure}
\centering
\resizebox{0.8\columnwidth}{!}{
\includegraphics{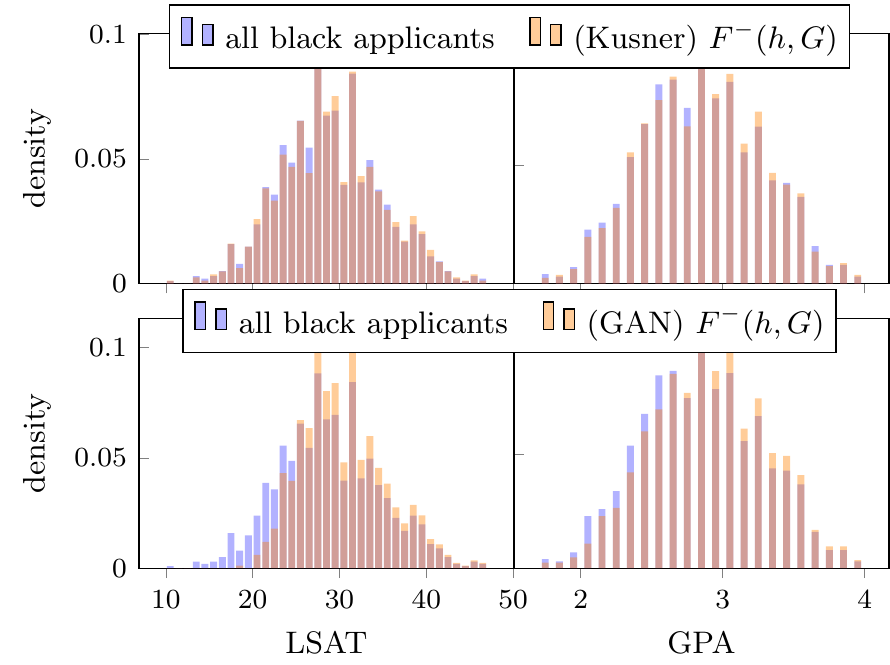}
}
\caption{Flipset distributions for the counterfactual (Kusner) and the GAN-generated data. Transparency report results are included in Figure~\ref{fig:counter_trans} in the
\ifarxiv
appendix.
\else
supplementary material.
\fi}
\label{fig:count2}
\end{figure}

In this section, we compare FlipTest with the auditing technique described by Kusner et al.~\cite{kusner2017counterfactual} on a law school dataset~\cite{wightman1998lsac}.
We test for fairness in two models: a regression model that predicts the first year average grade (FYA) in law school based on LSAT score, undergraduate GPA, race, and gender; and a binary classification model that predicts whether an individual will be in the top or bottom 50\% of class using the same attributes.
We create this second model by using the median FYA as a threshold.

Kusner et al.\ construct a causal model that postulates how race and gender are related to LSAT scores and GPAs, so that they can intervene on these attributes to construct counterfactuals.
Their model assumes that GPA is distributed according to $\mathcal{N}(b_G+w^{K}_GK+w^{R}_GR+w^{S}_GS, \sigma_G)$ and LSAT as Poisson($b_L+w^{K}_LK+w^{R}_LR+w^{S}_LS$), where $R$ is race, $S$ is sex, and $K$ is knowledge, an underlying parameter the model postulates is independent of race and sex.
$w^{R}_\star$ and $w^{S}_\star$ are weights for race and sex specific to the GPA and LSAT distributions, and knowledge is distributed as $\mathcal{N}(0,1)$.
Using this causal model, they create counterfactuals by intervening on the protected attributes, i.e., generating the underlying knowledge value for a section of the law school data, and then re-generating LSAT and GPA scores with flipped race attributes.

Kusner et al.\ use these counterfactual distributions to audit the law school regression model.
They determine that a model is counterfactually fair if the distributions of the model's prediction are identical for the pre-intervention and post-intervention inputs.
We perform this same test, using both their causally generated inputs and the pairs generated by an approximate optimal transport mapping ($\lambda=5 \times 10^{-5}$).
Since Kusner et al.\ do not normalize their data for their models or counterfactual generation process, we also do not normalize this data. 
We see in Figure~\ref{fig:count1} that the two methods of data generation give similar results on this test, both suggesting a conclusion of discrimination by the model.

We also provide a comparison with the FlipTest method, again using both methods of generating target distributions.
The flipset sizes are very similar for both models: both models have $|\posflip| = 0$, and $|\negflip|$ is 811 and 835 for the counterfactual and GAN data, respectively.
Figure~\ref{fig:count2} shows that the distributions of the flipsets are similar, but the GAN-generated inputs tend to lead to a flipset that contains black applicants with higher GPA and LSAT scores.
The transparency reports (Figure~\ref{fig:counter_trans} in the
\ifarxiv
appendix)
\else
supplementary material)
\fi
for the two methods also largely agree with each other, with LSAT ranked higher than GPA, and gender having essentially no effect. Since gender has little effect on the flipset, we do not plot the gender distributions of the two flipsets, and we do not report the transparency report results for the race feature, since by definition of the flipset, race will always change. 
However, other features can lead to bias as well---the transparency report clearly shows that LSAT score changes significantly under the GAN mapping ($>1$ standard deviation), and this may indicate another source of bias in the model.

We now investigate the weights of our model, comparing the transparency report to what we know about the model.
First, there is discrimination in the model based on race, as evidenced by the fact that the feature \emph{race=black} has a negative weight and the feature \emph{race=white} has a positive weight.
Second, after correcting for the fact that the data in this experiment is not normalized, we see that the weight for LSAT is higher than that for GPA.
The transparency report is consistent with this observation.

We conclude this section by observing that, even without access to a causal model, FlipTest can give nearly identical results as causally generated counterfactuals.


\subsection{Recognizing Biased Features}
\label{sec:fairtest}

Here we demonstrate a major conceptual difference of FlipTest from FairTest by Tram\`er et al.~\cite{tramer2017fairtest}.
FairTest searches for subgroups experiencing possible discrimination by using a decision tree to iteratively search for subgroups within which there is high correlation between the model output and the protected attribute.
It then reports the subgroups with has the highest correlations.

To clearly illustrate the difference between FairTest and FlipTest, we use a synthetic data distribution with only one feature: the number of prior arrests.
This feature is distributed as $\text{Geometric}(1/4) - 1$ for people in $\ds$ and $\text{Geometric}(1/2) - 1$ for people in $\ds'$.
In addition, we assume a simple model, which classifies a person as low risk if the number of prior arrests is zero, high risk if it is two or more, and either low risk or high risk uniformly at random if there is exactly one prior arrest.
Because the members of $\ds$ tend to have more prior arrests than those in $\ds'$, this model classifies disproportionately higher number of people in $\ds$ as high risk.

We let $S$ and $S'$ be sets of 10,000 points drawn from $\ds$ and $\ds'$, respectively, and ran FairTest and FlipTest on these sets.
FairTest correctly identified possible discrimination at the group level, reporting a confidence interval of $[0.2452, 0.2941]$ for the correlation between the model's output and the protected attribute.
However, the correlation decreased in all subgroups that FairTest subsequently considered.
This is because subgroups in FairTest are defined by the values of the input features.
Thus, FairTest compares a subgroup of $S$ to a subgroup of $S'$ \emph{that has similar numbers of arrests}.

On the other hand, FlipTest recognizes that the distributions of the number of arrests is different, and adjusts the comparisons accordingly.
For example, the set of people with 1 arrest in $S$ is compared to the set with 2--4 arrests in $S'$.
As a result, the flipsets are very large and unbalanced, with $|\posflip| = 2572$ and $|\negflip| = 0$.
In addition, the transparency report explains a reason for this difference, showing that 100\% of the people in $\posflip$ had more arrests than their statistical counterparts in $\ds'$, with an average of 1.44 more arrests.

We see this phenomenon on real data as well.
On the SSL data (Section~\ref{sec:ssl}), FairTest notes an overall bias against the entire black population, but does not report that the discrimination is based on narcotics arrest since the feature itself is biased, with higher levels for black subjects than white subjects.
The full results are given in Figure~\ref{fig:fairtest_ssl} in the
\ifarxiv
appendix.
\else
supplementary material.
\fi

Thus, the choice of the appropriate fairness test may depend on the setting.
If the number of prior arrests is a strong indicator for recidivism risk, then it makes sense to compare subgroups of people with similar numbers of arrests.
On the other hand, if the model had used a feature that is completely unrelated to crime, it would be harder to justify comparing people who are similar with respect to that feature.
Our experiments in this section show that FairTest is better suited for settings where the input features can justify potential differences in the model's output.

	\section{Related Work}
	\label{sec:related}

\textbf{Counterfactual Fairness Testing.}
Counterfactual fairness~\cite{kusner2017counterfactual} compares the model's behavior on a real input and a counterfactual, causally generated input.
Similarly, Datta et al.~\cite{datta2016algorithmic} perform causal interventions on input features to study which features are influential in changing in the output of the model, Wachter et al.~\cite{wachter2018counterfactual} generate simple $L_1$-nearest counterfactuals as a form of explanations for model outputs, and Ustun et al.~\cite{ustun2019actionable} develop a method that outlines what actions individuals can take to change their classification outcome in linear models.
However, these methods, like those discussed in the introduction~\cite{galhotra2017fairness, agarwal2018automated}, generate potentially unrealistic, out-of-distribution points, which can jeopardize their conclusions.
By contrast, the points that we generate conform to the data distribution.

\vspace*{1ex}
\noindent
\textbf{Optimal Transport.}
Others have proposed using the optimal transport map in the context of fairness, but to the best of our knowledge, it has not yet been used as a fairness testing mechanism.
Del Barrio et al.~\cite{delbarrio2018obtaining} use the optimal transport mapping to extend a previous method~\cite{feldman2015certifying} of data pre-processing, which obfuscates protected attribute information, to the multivariate setting.
Concurrent work from Yang et al.~\cite{yang2019scalable} also develop a method of approximating an (unbalanced) optimal transport mapping using GANs.
Their formulation is closely related to ours, but they do not consider its application to fairness testing.
Altschuler et al.~\cite{altschuler2017near} and Quanrud~\cite{quanrud2018approximating} present efficient methods for approximating the optimal transport mapping, but the resulting mappings are not defined for previously unseen data points.
By contrast, the mappings produced by Leygonie et al.~\cite{leygonie2019adversarial}, Perrot et al.~\cite{perrot2016mapping}, and Seguy et al.~\cite{seguy2017large} do generalize to unseen points, making them suitable for use with FlipTest.

\vspace*{1ex}
\noindent
\textbf{Individual Fairness.}
Individual fairness criteria~\cite{joseph2016fairness, zemel2013learning, dwork2018fairness,dwork2012fairness} bind guarantees about the fairness of a model's behavior to every individual, as opposed to an aggregated statistic.
Dwork et al.~\cite{dwork2012fairness} note that, in some cases, the model must essentially be a constant function to satisfy individual fairness and group fairness at the same time.
They propose an alternative that applies an optimal transport mapping to one of the groups, obtaining a transformed dataset on which they solve individual fairness.
FlipTest is motivated by this approach, but we specifically look for potentially discriminatory differences between pairs of individuals \emph{that belong to different groups}, and use optimal transport to construct pairs of individuals that exemplify these differences.

\vspace*{1ex}
\noindent
\textbf{Subgroup Fairness.}
One application of this work is uncovering \emph{subgroup unfairness}~\cite{kearns2019empirical,hebert2018multicalibration}, i.e., identifying subgroups that are possibly harmed as a result of their group membership.
FairTest~\cite{tramer2017fairtest} uses a decision tree to find subgroups with high discriminatory association, while taking care to ensure that the association is statistically significant.
However, as we show in Section~\ref{sec:fairtest}, FairTest does not handle the case where the input feature itself is biased.
Zhang et al.~\cite{zhang2016identifying} also find a computationally faster way to search the exponentially large number of subgroups, but their method also suffers from the same issue that FairTest does.
Kearns et al.~\cite{kearns2018preventing} prove that checking for subgroup fairness is equivalent to weak agnostic learning, which is computationally hard in the worst case.
FlipTest differs from these works in that we do not require the subgroups of interest to be specified before the fairness testing.

	\section{Conclusion}

FlipTest is a low-cost fairness testing framework that is sensitive to discrimination beyond group fairness metrics, is proficient at displaying unfair treatment in models with biased data, and can be used to as a first step towards detecting a model's method of discrimination.
As future work, extending the framework beyond binary classifiers, which are the most commonly studied case in the fairness literature, and exploring the application of FlipTest to uncovering additional types of discrimination are both promising directions.


\ifarxiv
\subsection*{Acknowledgment}
This material is based upon work supported by the National Science Foundation under Grant No.~CNS-1704845.
\fi

	\bibliographystyle{plainnat}
	\bibliography{biblio}

\begin{thebibliography}{49}
\providecommand{\natexlab}[1]{#1}
\providecommand{\url}[1]{\texttt{#1}}
\expandafter\ifx\csname urlstyle\endcsname\relax
  \providecommand{\doi}[1]{doi: #1}\else
  \providecommand{\doi}{doi: \begingroup \urlstyle{rm}\Url}\fi

\bibitem[Abadi et~al.(2015)Abadi, Agarwal, Barham, Brevdo, Chen, Citro,
  Corrado, Davis, Dean, Devin, Ghemawat, Goodfellow, Harp, Irving, Isard, Jia,
  Jozefowicz, Kaiser, Kudlur, Levenberg, Man\'{e}, Monga, Moore, Murray, Olah,
  Schuster, Shlens, Steiner, Sutskever, Talwar, Tucker, Vanhoucke, Vasudevan,
  Vi\'{e}gas, Vinyals, Warden, Wattenberg, Wicke, Yu, and Zheng]{tensorflow}
Mart\'{\i}n Abadi, Ashish Agarwal, Paul Barham, Eugene Brevdo, Zhifeng Chen,
  Craig Citro, Greg~S. Corrado, Andy Davis, Jeffrey Dean, Matthieu Devin,
  Sanjay Ghemawat, Ian Goodfellow, Andrew Harp, Geoffrey Irving, Michael Isard,
  Yangqing Jia, Rafal Jozefowicz, Lukasz Kaiser, Manjunath Kudlur, Josh
  Levenberg, Dandelion Man\'{e}, Rajat Monga, Sherry Moore, Derek Murray, Chris
  Olah, Mike Schuster, Jonathon Shlens, Benoit Steiner, Ilya Sutskever, Kunal
  Talwar, Paul Tucker, Vincent Vanhoucke, Vijay Vasudevan, Fernanda Vi\'{e}gas,
  Oriol Vinyals, Pete Warden, Martin Wattenberg, Martin Wicke, Yuan Yu, and
  Xiaoqiang Zheng.
\newblock {TensorFlow}: Large-scale machine learning on heterogeneous systems.
\newblock \url{https://www.tensorflow.org/}, 2015.

\bibitem[Agarwal et~al.(2018)Agarwal, Lohia, Nagar, Dey, and
  Saha]{agarwal2018automated}
Aniya Agarwal, Pranay Lohia, Seema Nagar, Kuntal Dey, and Diptikalyan Saha.
\newblock Automated test generation to detect individual discrimination in {AI}
  models.
\newblock \emph{arXiv preprint arXiv:1809.03260}, 2018.

\bibitem[Altschuler et~al.(2017)Altschuler, Weed, and
  Rigollet]{altschuler2017near}
Jason Altschuler, Jonathan Weed, and Philippe Rigollet.
\newblock Near-linear time approximation algorithms for optimal transport via
  {S}inkhorn iteration.
\newblock In \emph{Advances in Neural Information Processing Systems}, pages
  1964--1974, 2017.

\bibitem[Angwin et~al.(2016)Angwin, Larson, Mattu, and
  Kirchner]{angwin2016machine}
Julia Angwin, Jeff Larson, Surya Mattu, and Lauren Kirchner.
\newblock Machine bias: There's software used across the country to predict
  future criminals. and it's biased against blacks.
\newblock \emph{ProPublica}, 2016.

\bibitem[Arjovsky et~al.(2017)Arjovsky, Chintala, and
  Bottou]{arjovsky2017wasserstein}
Martin Arjovsky, Soumith Chintala, and L{\'e}on Bottou.
\newblock Wasserstein {GAN}.
\newblock \emph{arXiv preprint arXiv:1701.07875}, 2017.

\bibitem[Barocas and Selbst(2016)]{barocas2016big}
Solon Barocas and Andrew~D Selbst.
\newblock Big data's disparate impact.
\newblock \emph{California Law Review}, 104:\penalty0 671--732, 2016.

\bibitem[Bellamy et~al.(2018)Bellamy, Dey, Hind, Hoffman, Houde, Kannan, Lohia,
  Martino, Mehta, Mojsilovic, Nagar, Ramamurthy, Richards, Saha, Sattigeri,
  Singh, Varshney, and Zhang]{AIF360}
Rachel K.~E. Bellamy, Kuntal Dey, Michael Hind, Samuel~C. Hoffman, Stephanie
  Houde, Kalapriya Kannan, Pranay Lohia, Jacquelyn Martino, Sameep Mehta,
  Aleksandra Mojsilovic, Seema Nagar, Karthikeyan~Natesan Ramamurthy, John~T.
  Richards, Diptikalyan Saha, Prasanna Sattigeri, Moninder Singh, Kush~R.
  Varshney, and Yunfeng Zhang.
\newblock {AI} fairness 360: An extensible toolkit for detecting,
  understanding, and mitigating unwanted algorithmic bias.
\newblock \emph{CoRR}, abs/1810.01943, 2018.
\newblock URL \url{http://arxiv.org/abs/1810.01943}.

\bibitem[Chollet et~al.(2015)]{keras}
Fran\c{c}ois Chollet et~al.
\newblock Keras.
\newblock \url{https://keras.io}, 2015.

\bibitem[Chouldechova(2017)]{chouldechova2017fair}
Alexandra Chouldechova.
\newblock Fair prediction with disparate impact: A study of bias in recidivism
  prediction instruments.
\newblock \emph{Big Data}, 5\penalty0 (2):\penalty0 153--163, 2017.

\bibitem[{City of Chicago}(2017)]{chicago}
{City of Chicago}.
\newblock {S}trategic {S}ubject {L}ist.
\newblock
  \url{https://data.cityofchicago.org/Public-Safety/Strategic-Subject-List/4aki-r3np},
  2017.

\bibitem[Dastin(2018)]{dastin2018amazon}
Jeffrey Dastin.
\newblock Amazon scraps secret {AI} recruiting tool that showed bias against
  women.
\newblock \emph{Reuters}, 2018.

\bibitem[Datta et~al.(2016)Datta, Sen, and Zick]{datta2016algorithmic}
Anupam Datta, Shayak Sen, and Yair Zick.
\newblock Algorithmic transparency via quantitative input influence: Theory and
  experiments with learning systems.
\newblock In \emph{IEEE Symposium on Security and Privacy}, pages 598--617,
  2016.

\bibitem[Datta et~al.(2017)Datta, Fredrikson, Ko, Mardziel, and
  Sen]{datta2017proxy}
Anupam Datta, Matt Fredrikson, Gihyuk Ko, Piotr Mardziel, and Shayak Sen.
\newblock Proxy discrimination in data-driven systems.
\newblock \emph{arXiv preprint arXiv:1707.08120}, 2017.

\bibitem[del Barrio et~al.(2018)del Barrio, Gamboa, Gordaliza, and
  Loubes]{delbarrio2018obtaining}
Eustasio del Barrio, Fabrice Gamboa, Paula Gordaliza, and Jean-Michel Loubes.
\newblock Obtaining fairness using optimal transport theory.
\newblock \emph{arXiv preprint arXiv:1806.03195}, 2018.

\bibitem[Dwork and Ilvento(2018)]{dwork2018fairness}
Cynthia Dwork and Christina Ilvento.
\newblock Fairness under composition.
\newblock \emph{arXiv preprint arXiv:1806.06122}, 2018.

\bibitem[Dwork et~al.(2012)Dwork, Hardt, Pitassi, Reingold, and
  Zemel]{dwork2012fairness}
Cynthia Dwork, Moritz Hardt, Toniann Pitassi, Omer Reingold, and Richard Zemel.
\newblock Fairness through awareness.
\newblock In \emph{Innovations in Theoretical Computer Science}, pages
  214--226, 2012.

\bibitem[Equivant(2019)]{compas}
Equivant.
\newblock Practitioner's guide to {COMPAS} core.
\newblock
  \url{http://www.equivant.com/wp-content/uploads/Practitioners-Guide-to-COMPAS-Core-040419.pdf},
  2019.

\bibitem[Feldman et~al.(2015)Feldman, Friedler, Moeller, Scheidegger, and
  Venkatasubramanian]{feldman2015certifying}
Michael Feldman, Sorelle~A Friedler, John Moeller, Carlos Scheidegger, and
  Suresh Venkatasubramanian.
\newblock Certifying and removing disparate impact.
\newblock In \emph{ACM SIGKDD International Conference on Knowledge Discovery
  and Data Mining}, pages 259--268, 2015.

\bibitem[Galhotra et~al.(2017)Galhotra, Brun, and Meliou]{galhotra2017fairness}
Sainyam Galhotra, Yuriy Brun, and Alexandra Meliou.
\newblock Fairness testing: testing software for discrimination.
\newblock In \emph{Foundations of Software Engineering}, pages 498--510, 2017.

\bibitem[Goodfellow et~al.(2014)Goodfellow, Pouget-Abadie, Mirza, Xu,
  Warde-Farley, Ozair, Courville, and Bengio]{goodfellow2014generative}
Ian Goodfellow, Jean Pouget-Abadie, Mehdi Mirza, Bing Xu, David Warde-Farley,
  Sherjil Ozair, Aaron Courville, and Yoshua Bengio.
\newblock Generative adversarial nets.
\newblock In \emph{Advances in Neural Information Processing Systems}, pages
  2672--2680, 2014.

\bibitem[{Gurobi Optimization, LLC}(2019)]{gurobi}
{Gurobi Optimization, LLC}.
\newblock Gurobi optimizer reference manual.
\newblock \url{https://www.gurobi.com/documentation/8.1/refman.pdf}, 2019.

\bibitem[Hardt et~al.(2016)Hardt, Price, and Srebro]{hardt2016equality}
Moritz Hardt, Eric Price, and Nati Srebro.
\newblock Equality of opportunity in supervised learning.
\newblock In \emph{Advances in Neural Information Processing Systems}, pages
  3315--3323, 2016.

\bibitem[H{\'e}bert-Johnson et~al.(2018)H{\'e}bert-Johnson, Kim, Reingold, and
  Rothblum]{hebert2018multicalibration}
{\'U}rsula H{\'e}bert-Johnson, Michael Kim, Omer Reingold, and Guy Rothblum.
\newblock Multicalibration: Calibration for the (computationally-identifiable)
  masses.
\newblock In \emph{International Conference on Machine Learning}, pages
  1944--1953, 2018.

\bibitem[Joseph et~al.(2016)Joseph, Kearns, Morgenstern, and
  Roth]{joseph2016fairness}
Matthew Joseph, Michael Kearns, Jamie~H Morgenstern, and Aaron Roth.
\newblock Fairness in learning: Classic and contextual bandits.
\newblock In \emph{Advances in Neural Information Processing Systems}, pages
  325--333, 2016.

\bibitem[Kearns et~al.(2018)Kearns, Neel, Roth, and Wu]{kearns2018preventing}
Michael Kearns, Seth Neel, Aaron Roth, and Zhiwei~Steven Wu.
\newblock Preventing fairness gerrymandering: Auditing and learning for
  subgroup fairness.
\newblock In \emph{International Conference on Machine Learning}, pages
  2564--2572, 2018.

\bibitem[Kearns et~al.(2019)Kearns, Neel, Roth, and Wu]{kearns2019empirical}
Michael Kearns, Seth Neel, Aaron Roth, and Zhiwei~Steven Wu.
\newblock An empirical study of rich subgroup fairness for machine learning.
\newblock In \emph{Conference on Fairness, Accountability, and Transparency},
  pages 100--109, 2019.

\bibitem[Kleinberg et~al.(2017)Kleinberg, Mullainathan, and
  Raghavan]{kleinberg2017inherent}
Jon Kleinberg, Sendhil Mullainathan, and Manish Raghavan.
\newblock Inherent trade-offs in the fair determination of risk scores.
\newblock In \emph{Innovations in Theoretical Computer Science}, pages
  43:1--43:23, 2017.

\bibitem[Kuhn and Yaw(1955)]{kuhn1955hungarian}
H.~W. Kuhn and Bryn Yaw.
\newblock The {H}ungarian method for the assignment problem.
\newblock \emph{Naval Research Logistics Quarterly}, 2:\penalty0 83--97, 1955.

\bibitem[Kusner et~al.(2017)Kusner, Loftus, Russell, and
  Silva]{kusner2017counterfactual}
Matt~J Kusner, Joshua Loftus, Chris Russell, and Ricardo Silva.
\newblock Counterfactual fairness.
\newblock In \emph{Advances in Neural Information Processing Systems}, pages
  4066--4076, 2017.

\bibitem[Leygonie et~al.(2019)Leygonie, She, Almahairi, Rajeswar, and
  Courville]{leygonie2019adversarial}
Jacob Leygonie, Jennifer She, Amjad Almahairi, Sai Rajeswar, and Aaron
  Courville.
\newblock Adversarial computation of optimal transport maps.
\newblock \emph{arXiv preprint arXiv:1906.09691}, 2019.

\bibitem[Lipton et~al.(2018)Lipton, McAuley, and Chouldechova]{lipton2018does}
Zachary Lipton, Julian McAuley, and Alexandra Chouldechova.
\newblock Does mitigating {ML}'s impact disparity require treatment disparity?
\newblock In \emph{Advances in Neural Information Processing Systems}, pages
  8125--8135, 2018.

\bibitem[{Parliament of the United Kingdom}(2010)]{uk-equality}
{Parliament of the United Kingdom}.
\newblock {Equality Act 2010}.
\newblock \url{https://www.legislation.gov.uk/ukpga/2010/15/contents}, 2010.

\bibitem[Perrot et~al.(2016)Perrot, Courty, Flamary, and
  Habrard]{perrot2016mapping}
Micha{\"e}l Perrot, Nicolas Courty, R{\'e}mi Flamary, and Amaury Habrard.
\newblock Mapping estimation for discrete optimal transport.
\newblock In \emph{Advances in Neural Information Processing Systems}, pages
  4197--4205, 2016.

\bibitem[Peyr{\'{e}} and Cuturi(2019)]{peyre2019computational}
Gabriel Peyr{\'{e}} and Marco Cuturi.
\newblock Computational optimal transport.
\newblock \emph{Foundations and Trends{\textregistered} in Machine Learning},
  11\penalty0 (5--6):\penalty0 355--607, 2019.

\bibitem[Quanrud(2018)]{quanrud2018approximating}
Kent Quanrud.
\newblock Approximating optimal transport with linear programs.
\newblock \emph{arXiv preprint arXiv:1810.05957}, 2018.

\bibitem[Seguy et~al.(2017)Seguy, Damodaran, Flamary, Courty, Rolet, and
  Blondel]{seguy2017large}
Vivien Seguy, Bharath~Bhushan Damodaran, R{\'e}mi Flamary, Nicolas Courty,
  Antoine Rolet, and Mathieu Blondel.
\newblock Large-scale optimal transport and mapping estimation.
\newblock \emph{arXiv preprint arXiv:1711.02283}, 2017.

\bibitem[{Supreme Court of the United States}(1971)]{griggs1971}
{Supreme Court of the United States}.
\newblock \textit{Griggs v.\ Duke Power Co.}
\newblock 401 U.S. 424, 1971.

\bibitem[Tram{\`e}r et~al.(2017)Tram{\`e}r, Atlidakis, Geambasu, Hsu, Hubaux,
  Humbert, Juels, and Lin]{tramer2017fairtest}
Florian Tram{\`e}r, Vaggelis Atlidakis, Roxana Geambasu, Daniel Hsu,
  Jean-Pierre Hubaux, Mathias Humbert, Ari Juels, and Huang Lin.
\newblock Fairtest: Discovering unwarranted associations in data-driven
  applications.
\newblock In \emph{IEEE European Symposium on Security and Privacy}, pages
  401--416, 2017.

\bibitem[Ustun et~al.(2019)Ustun, Spangher, and Liu]{ustun2019actionable}
Berk Ustun, Alexander Spangher, and Yang Liu.
\newblock Actionable recourse in linear classification.
\newblock In \emph{Conference on Fairness, Accountability, and Transparency},
  pages 10--19, 2019.

\bibitem[Vaithianathan et~al.(2017)Vaithianathan, Putnam-Hornstein, Jiang,
  Nand, and Maloney]{allegheny}
Rhema Vaithianathan, Emily Putnam-Hornstein, Nan Jiang, Parma Nand, and Tim
  Maloney.
\newblock Developing predictive models to support child maltreatment hotline
  screening decisions: {A}llegheny {C}ounty methodology and implementation.
\newblock
  \url{https://www.alleghenycountyanalytics.us/wp-content/uploads/2018/02/DevelopingPredictiveRiskModels-package_011618.pdf},
  2017.

\bibitem[Villani(2008)]{villani2008optimal}
C{\'e}dric Villani.
\newblock \emph{Optimal transport: Old and new}, volume 338.
\newblock Springer Science \& Business Media, 2008.

\bibitem[Wachter et~al.(2018)Wachter, Mittelstadt, and
  Russell]{wachter2018counterfactual}
Sandra Wachter, Brent Mittelstadt, and Chris Russell.
\newblock Counterfactual explanations without opening the black box: Automated
  decisions and the gdpr.
\newblock \emph{Harvard Journal of Law \& Technology}, 31\penalty0
  (2):\penalty0 841--887, 2018.

\bibitem[Wightman and Ramsey(1998)]{wightman1998lsac}
Linda~F Wightman and Henry Ramsey.
\newblock \emph{{LSAC} national longitudinal bar passage study}.
\newblock Law School Admission Council, 1998.

\bibitem[Woodworth et~al.(2017)Woodworth, Gunasekar, Ohannessian, and
  Srebro]{woodworth2017learning}
Blake Woodworth, Suriya Gunasekar, Mesrob~I Ohannessian, and Nathan Srebro.
\newblock Learning non-discriminatory predictors.
\newblock In \emph{Conference on Learning Theory}, pages 1920--1953, 2017.

\bibitem[Yang and Uhler(2018)]{yang2019scalable}
Karren~D. Yang and Caroline Uhler.
\newblock Scalable unbalanced optimal transport using generative adversarial
  networks.
\newblock \emph{arXiv preprint arXiv:1810.11447}, 2018.

\bibitem[Yeom et~al.(2018)Yeom, Datta, and Fredrikson]{yeom2018hunting}
Samuel Yeom, Anupam Datta, and Matt Fredrikson.
\newblock Hunting for discriminatory proxies in linear regression models.
\newblock In \emph{Advances in Neural Information Processing Systems}, pages
  4568--4578, 2018.

\bibitem[Zafar et~al.(2017)Zafar, Valera, Rogriguez, and
  Gummadi]{zafar2017fairness-aistats}
Muhammad~Bilal Zafar, Isabel Valera, Manuel~Gomez Rogriguez, and Krishna~P
  Gummadi.
\newblock Fairness constraints: Mechanisms for fair classification.
\newblock In \emph{Artificial Intelligence and Statistics}, pages 962--970,
  2017.

\bibitem[Zemel et~al.(2013)Zemel, Wu, Swersky, Pitassi, and
  Dwork]{zemel2013learning}
Rich Zemel, Yu~Wu, Kevin Swersky, Toni Pitassi, and Cynthia Dwork.
\newblock Learning fair representations.
\newblock In \emph{International Conference on Machine Learning}, pages
  325--333, 2013.

\bibitem[Zhang and Neill(2016)]{zhang2016identifying}
Zhe Zhang and Daniel~B Neill.
\newblock Identifying significant predictive bias in classifiers.
\newblock \emph{arXiv preprint arXiv:1611.08292}, 2016.

\end{thebibliography}
	
	\newpage
	\appendix

\section{Proofs}
\label{app:proof}
\setcounter{proposition}{0}
\begin{proposition}
	Suppose that $G^*$ is a minimizer of $L_G$ among all $G$ such that $G(S) = S'$.
	If $\lambda > 0$, $G^*$ is an exact optimal transport mapping from $S$ to $S'$.
\end{proposition}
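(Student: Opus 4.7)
The plan is to exploit the constraint $G(S) = S'$ to reduce $L_G$ to something proportional to the transport cost alone. The key observation is that the discriminator term $\frac{1}{n}\sum_{x \in S} D(G(x))$ becomes a constant once we restrict to feasible $G$, so only the cost term has any say in the optimization.

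First, I would argue that under the constraint $G(S) = S'$ together with $|S| = |S'| = n$, the restriction $G|_S$ must be a bijection onto $S'$. If $G|_S$ failed to be injective, the image $G(S)$ would have strictly fewer than $n$ elements and hence could not equal $S'$. Thus any feasible $G$ induces a bijection from $S$ to $S'$.

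Second, using this bijection, I would reindex the first sum to obtain
\begin{equation*}
\frac{1}{n}\sum_{x \in S} D(G(x)) \;=\; \frac{1}{n}\sum_{x' \in S'} D(x'),
\end{equation*}
which is independent of the particular choice of feasible $G$. Consequently,
\begin{equation*}
L_G \;=\; \underbrace{\frac{1}{n}\sum_{x' \in S'} D(x')}_{\text{constant in } G} \;+\; \frac{\lambda}{n}\sum_{x \in S} c(x, G(x)),
\end{equation*}
so on the feasible set $\{G : G(S)=S'\}$, minimizing $L_G$ is equivalent to minimizing $\sum_{x \in S} c(x, G(x))$ (here the hypothesis $\lambda > 0$ is used so that scaling by $\lambda/n$ preserves the argmin). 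By the definition of the optimal transport mapping between $S$ and $S'$ as the bijection minimizing $\mathbb{E}[c(x,f(x))] = \frac{1}{n}\sum_i c(x_i, f(x_i))$, any minimizer $G^*$ is exactly such a mapping.

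There is essentially no obstacle here beyond being explicit about the bijection step; the decoupling of the two terms in $L_G$ under the feasibility constraint does all the work, and the positivity of $\lambda$ is what guarantees that the cost term is not simply ignored.
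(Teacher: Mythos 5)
Your proof is correct and follows essentially the same route as the paper's: observe that on the feasible set $\{G : G(S)=S'\}$ the map is a bijection, so the discriminator term is a constant and minimizing $L_G$ reduces to minimizing the transport cost (using $\lambda>0$). Your explicit justification of the bijection step via injectivity on finite sets of equal cardinality is a small but welcome addition of rigor over the paper's version.
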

\begin{proof}
	If $G(S) = S'$, $G$ induces a bijection between $S$ and $S'$, so we have $\frac{1}{n} \sum_{\boldsymbol{x} \in S} D(G(\boldsymbol{x})) = \frac{1}{n} \sum_{\boldsymbol{x}' \in S'} D(\boldsymbol{x}')$.
	Then, \eqref{eqn:loss} becomes
	\[ \textstyle L_G = \frac{1}{n} \sum_{\boldsymbol{x}' \in S'} D(\boldsymbol{x}') + \frac{\lambda}{n} \sum_{\boldsymbol{x} \in S} c(\boldsymbol{x}, G(\boldsymbol{x})). \]
	The first term does not depend on $G$, so if $G^*$ minimizes $L_G$, it also minimizes $\frac{\lambda}{n} \sum_{\boldsymbol{x} \in S} c(\boldsymbol{x}, G(\boldsymbol{x}))$.
	Since $\lambda > 0$, $G$ then minimizes $\frac{1}{n} \sum_{\boldsymbol{x} \in S} c(\boldsymbol{x}, G(\boldsymbol{x}))$, which is exactly the definition of an optimal transport mapping.
\end{proof}

\begin{proposition}
	Let $h$ be a binary classifier and $G: S \to S'$, with $|S| = |S'|$, be the exact optimal transport mapping.
	Then, $|\posflip| = |\negflip|$ if and only if the model satisfies demographic parity on the observed points, i.e.,
	\[ |\{\boldsymbol{x} \in S \mid h(\boldsymbol{x}) = 1 \}| = |\{\boldsymbol{x}' \in S' \mid h(\boldsymbol{x}') = 1 \}|. \]
\end{proposition}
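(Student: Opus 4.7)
The plan is to exploit the fact that an exact optimal transport mapping $G : S \to S'$ with $|S| = |S'|$ is a bijection (this is the assumption already made in the setup of Section~\ref{sec:transport}, where $f : S \to S'$ is defined as a bijection minimizing the expected cost). Given bijectivity, the proposition reduces to a one-line counting identity, and both directions of the ``if and only if'' come out simultaneously.

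Concretely, I would partition $S$ into four disjoint pieces according to the pair of outputs $(h(\boldsymbol{x}), h(G(\boldsymbol{x})))$. Writing $a, b, c, d$ for the cardinalities of the pieces indexed by $(0,0), (0,1), (1,0), (1,1)$ respectively, Definition~\ref{def:flipset} gives immediately $b = |\negflip|$ and $c = |\posflip|$. The number of observed points in $S$ classified as positive is then $c + d$.

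The key step is to rewrite the number of positive classifications on $S'$ using $G$. Because $G$ is a bijection from $S$ onto $S'$, the image $\{G(\boldsymbol{x}) : \boldsymbol{x} \in S\}$ equals $S'$ exactly (as a multiset), so
\[ |\{\boldsymbol{x}' \in S' \mid h(\boldsymbol{x}') = 1\}| \;=\; |\{\boldsymbol{x} \in S \mid h(G(\boldsymbol{x})) = 1\}| \;=\; b + d. \]
The demographic parity condition at the observed points then reads $c + d = b + d$, which is equivalent to $b = c$, i.e., $|\negflip| = |\posflip|$. Since this manipulation is an ``iff'', both directions are proved at once.

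There is no serious obstacle here; the only thing worth flagging is that the entire argument leans on $G$ being a bijection, which is why the statement restricts to the exact optimal transport mapping with $|S| = |S'|$ rather than to the GAN approximation or the randomized mappings mentioned earlier. If bijectivity were dropped, the identity $\sum_{\boldsymbol{x} \in S} h(G(\boldsymbol{x})) = \sum_{\boldsymbol{x}' \in S'} h(\boldsymbol{x}')$ would fail in general, and the equivalence would break.
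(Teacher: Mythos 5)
Your proof is correct and is essentially the same argument as the paper's: both reduce the claim to the counting identity that the number of positives in $S$ minus the number of points with $h(\boldsymbol{x}) = h(G(\boldsymbol{x})) = 1$ equals $|\posflip|$ (and symmetrically for $|\negflip|$), and both use bijectivity of $G$ to identify $|\{\boldsymbol{x} \in S \mid h(G(\boldsymbol{x})) = 1\}|$ with $|\{\boldsymbol{x}' \in S' \mid h(\boldsymbol{x}') = 1\}|$. Your four-way partition with the common term $d$ cancelling is just a slightly more explicit bookkeeping of the paper's subtraction step.
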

\begin{proof}
	We have
	\begin{align*}
		&|\posflip| \\
		&= |\{\boldsymbol{x} \in S \mid h(\boldsymbol{x}) = 1 \wedge h(G(\boldsymbol{x})) = 0\}| \\
		&= |\{\boldsymbol{x} \in S \mid h(\boldsymbol{x}) = 1\}| - |\{\boldsymbol{x} \in S \mid h(\boldsymbol{x}) = 1 \wedge h(G(\boldsymbol{x})) = 1\}|,
	\end{align*}
	and similarly we have
	\begin{align*}
		&|\negflip| \\
		&= |\{\boldsymbol{x} \in S \mid h(\boldsymbol{x}) = 0 \wedge h(G(\boldsymbol{x})) = 1\}| \\
		&= |\{\boldsymbol{x} \in S \mid h(G(\boldsymbol{x})) = 1\}| - |\{\boldsymbol{x} \in S \mid h(\boldsymbol{x}) = 1 \wedge h(G(\boldsymbol{x})) = 1\}|.
	\end{align*}
	Therefore, the equality $|\posflip| = |\negflip|$ is equivalent to $|\{\boldsymbol{x} \in S \mid h(\boldsymbol{x}) = 1\}| = |\{\boldsymbol{x} \in S \mid h(G(\boldsymbol{x})) = 1\}|$, and it remains to show that the right-hand side of the last equation equals $|\{\boldsymbol{x}' \in S' \mid h(\boldsymbol{x}') = 1 \}|$.
	This follows from letting $\boldsymbol{x}' = G(\boldsymbol{x})$ and the fact that $G$ is a bijection between $S$ and $S'$.
\end{proof}

\section{Experimental Control}
\label{app:control}

Here we give more details about the control experiment described in Section~\ref{sec:validation}.
We trained a fair model by ensuring that, from the perspective of the model, there are no distributional differences between the two protected groups.
More specifically, we set $\ds$ and $\ds'$ respectively to the normal distributions $\dn(\bm{\mu}_{\ds}, \mathbf{I}_6)$ and $\dn(\bm{\mu}_{\ds'}, \mathbf{I}_6)$, where $\bm{\mu}{}_{\ds} = (0, 0, 0, 1, 1, 1)^T$, $\bm{\mu}_{\ds'} = (0, 0, 0, -1, -1, -1)^T$, and $\mathbf{I}_6$ is the 6x6 identity matrix.
Thus, the first three features had the same joint distribution for both groups, and the model was allowed to use only these three features.

We constructed the training set by drawing 10,000 points from each of $\ds$ and $\ds'$.
To make the model arbitrary and complicated, we gave each point a binary class label uniformly at random and then trained with this data an SVM classifier with a radial basis function kernel.
Finally, we trained a GAN ($\lambda = 10^{-4}$) using a test set, which was drawn from $\ds$ and $\ds'$ in the same way as the training set, and observed the size of the flipset on the test set.

Our GAN correctly mapped the distribution $\ds$ to $\ds'$, changing each of the first three features by less than 0.01 on average and the other three by close to 2 on average.
As a result, even though the model had a very irregular decision boundary, with approximately 63\% training accuracy (the test accuracy was, of course, close to 50\%) and 53\% positive classification rate, the flipsets were small.
Only 167 of the \textasciitilde5,300 points classified as positive by the model flipped from positive to negative under the GAN mapping, and 148 of the \textasciitilde4,700 points classified as negative by the model flipped from negative to positive.
These numbers stayed similar when we altered the distributions $\ds$ and $\ds'$ so that the features were not necessarily normal and uncorrelated.

\section{GAN Experimental Setup Details}
\label{app:experimental-setup}
Our GAN experimental setup is the same in all experients in Section~\ref{sec:application} except $\lambda$, the $L_1$ constraint on generator output, and the batch size. We describe the general setup first. We use a Wasserstein GAN~\cite{arjovsky2017wasserstein} using Keras~\cite{keras} with a Tensowflow backend~\cite{tensorflow} on Python~3. Our Wasserstein GAN has a generator and a critic both with two dense layers of size 128; the critic has an output layer of size 1. We use ReLU activations on both discriminator and generator. We use the RMSProp optimizer with a learning rate of $5 \times 10^{-5}$. We train for 20,000 epochs, have a weight clip value of 0.01, and train the critic 5 times more than we train the generator.

For all experiments with the SSL dataset, we train the GAN on 1/4 of the SSL data~\cite{chicago}, separated into white and black groups, with only the 8 features mentioned in Section~\ref{sec:ssl}. This corresponds to 41,560 black subjects and 16,465 white subjects. For the demographic parity-based experiments, $\lambda = 5 \times 10^{-4}$ and the batch size is 4. For the equalized odds based experiments, $\lambda = 1 \times 10^{-4}$ and the batch size is 4. For all SSL experiments, we use random seed 100.

For the experiments on the Lipton et al.~\cite{lipton2018does} dataset, we have $\lambda = 10^{-4}$ and a batch size of 64. As described in Section~\ref{sec:lipton}, this GAN is trained on 10,000 men and 10,000 women generated from the synthetic data in~\cite{lipton2018does}. All experiments in Section~\ref{sec:lipton} use random seed 0.

For the experiments in Section~\ref{sec:counterfactual}, we use the law school success data~\cite{wightman1998lsac}, as preprocessed by Kusner et al.~\citep{kusner2017counterfactual}. We train a GAN with $\lambda = 5 \times 10^{-5}$ and batch size 2. The GAN is trained on 14,649 white applicants and 1008 black applicants. All experiments in Section~\ref{sec:counterfactual} use random seed 100.

We search for optimal hyper-parameters ($\lambda$ and batch size) by looking at 
\begin{itemize}
\item The results of Kolomorov-Smirnoff two sample tests between feature-wise marginal distributions of the target and the generated distributions. We aim to make the reported statistic as small as possible.
\item The difference in mean squared error between real and generated samples on a set of regression models, trained on the target distribution, one for each feature in the dataset, that predict that feature from the rest of the features in the dataset. 
\item The difference in the average squared $L_1$ distance between mapped pairs in the GAN generated mapping versus a 2,000-element sample of the exact optimal transport mapping.
\end{itemize}

\section{Training GANs in Opposite Direction}
\label{app:backwards}
We continue the discussion started in Section~\ref{sec:flipset}, with the same running example of a distribution of women $\ds$ and a distribution of men $\ds'$, who are inputs to a hiring classifier.
Note that under an exact optimal transport mapping, the map $f: \ds \to \ds'$ from female to male applicants and $f' : \ds' \to \ds$ from male to female applicants would be inverses---so the \posflipf of hired women mapped to unhired men and \negflippf of unhired men mapped to hired women would correspond to the same statistical pairs of individuals, and thus the sets would have the same size. (This would also hold for the \negflipf and \posflippf).

Given this behavior for flipsets based on exact optimal transport mappings, for two GAN mappings $G: \ds \to \ds'$ and $G': \ds' \to \ds$, we would expect that if the GANs are approximating the optimal transport mapping well, the size of \posflip (hired women mapped to unhired men) should be similar to the size of \negflipp (unhired men mapped to hired women).
Similarly, we would expect $|\negflip| \approx |\posflipp|$.
Thus, if we find similar sized flipsets across both directions of translations in our GAN approximations, then this gives us more confidence that the GAN is producing trustworthy output that approximates the optimal transport mapping well.

In Table~\ref{tbl:reverse_maps}, we provide the results of reverse mappings for all FlipTest experiments Section~\ref{sec:application}, along with the hyper-parameters used to train them.
We note that aside from the $\lambda$ value and batch size, the rest of the experimental setup was identical to that outlined in Appendix~\ref{app:experimental-setup}.
Overall, we find that the reverse mappings are consistent with the forward mapping presented in Section~\ref{sec:application}.

\begin{table}
\caption{Results of flipset sizes based on GAN mappings that are in the other direction than what is presented in the paper. $|\posflip|$ and $|\negflip|$ refer to the sizes of the flipsets that are presented in the main paper, $|\negflipp|$ and $|\posflipp|$ refer to the sizes of the flipsets based on a mapping in the reverse direction, and $\lambda$ and $b$ refer to the hyper-parameters $\lambda$ and batch size, respectively, used to create the GAN mapping in the reverse direction. We note that for the Equalized Odds (Pos) experiment, there were only 659 instances to train on, so we don't expect the model to learn a good mapping, which was reflected in the numbers generated in both directions. Additionally, for the Law School dataset, the simple model we used predicted no black students passing, so the large flipset presented is in fact in line with the model's predictions.}
\label{tbl:reverse_maps}
\centering
\resizebox{\columnwidth}{!}{%
\begin{tabular}{lrrrrrr}
 Experiment & \rotatebox{90}{$|\posflip|$} & \rotatebox{90}{$|\negflip|$} & \rotatebox{90}{$|\negflipp|$} & \rotatebox{90}{$|\posflipp|$} & \multicolumn{1}{c}{$\lambda$} & $b$\\\hline
{SSL: Dem Parity}
           & $1290$ & $4$ &  $815$ & $3$ & $2\times10^{-4}$ & $16$ \\
{SSL: Eq Odds (Neg)}
           & $499$ & $0$ &  $742$ & $1$ & $2\times10^{-4}$ & $4$ \\
{SSL: Eq Odds (Pos)}
           & $216$ & $102$ & $0$ & $190$ & $1\times10^{-4}$ & $4$\\
{Lipton: Dem Parity}
           & $715$ & $1215$ & $774$ & $1075$ & $1\times10^{-4}$ & $64$ \\
{Law School}
           & $0$ & $835$ & $0$ & $8943$ & $5\times10^{-5}$ & $2$
\end{tabular}
}
\end{table}



\begin{table}[h]
\caption{Full GAN validation results for all GANS used in Section~\ref{sec:application}. KS refers to the Kolmogorov--Smirnov two-sample test statistic. MSE Diff refers to the difference, between real and generated data, of the mean squared error of a linear regression model trained on the real data to predict a feature from the rest. For KS and MSE, we ran 10 trials; the mean is given first, with standard deviation in parentheses. Dist: OT, GAN refers to the average of the squared $L_1$ distance between a data point $x$ and its counterpart $G(x)$ under an exact optimal transport mapping (OT) or an approximated GAN mapping (GAN).}
\label{tbl:validations_app}
\centering
\resizebox{\columnwidth}{!}{%
\begin{tabular}{llrrr}
Experiment &  Features & KS (std) & MSE Diff (std) & Dist: OT, GAN\\\hline
\multirow{8}{*}{\shortstack[l]{SSL: \\ Dem Parity}}
           & Age         & $0.054$ $(0.001)$ &  $0.070$ $(0.048)$ & \multirow{8}{*}{$2.04$, $1.64$} \\
           & Vic Shoot   & $0.003$ $(0.003)$ &  $0.108$ $(0.125)$ & \\
           & Vic Assault & $0.010$ $(0.002)$ &  $0.259$ $(0.054)$ & \\
           & Vio Arr     & $0.005$ $(0.003)$ &  $0.065$ $(0.045)$ & \\
           & Gang Aff    & $0.018$ $(0.006)$ &  $0.094$ $(0.034)$ & \\
           & Narc Arr    & $0.025$ $(0.002)$ &  $0.298$ $(0.058)$ & \\
           & Trend       & $0.142$ $(0.009)$ &  $0.368$ $(0.073)$ & \\
           & UUW Arr     & $0.007$ $(0.002)$ & $-0.103$ $(0.040)$ & \\\hline
\multirow{8}{*}{\shortstack[l]{SSL: \\ Eq Odds (Neg)}}
           & Age         & $0.019$ $(0.021)$ &  $0.019$ $(0.057)$ & \multirow{8}{*}{$1.52$, $1.42$} \\
           & Vic Shoot   & $0.001$ $(0.000)$ & $-0.024$ $(0.000)$ & \\
           & Vic Assault & $0.004$ $(0.001)$ & $-0.073$ $(0.053)$ & \\
           & Vio Arr     & $0.003$ $(0.002)$ &  $0.015$ $(0.068)$ & \\
           & Gang Aff    & $0.007$ $(0.004)$ & $-0.009$ $(0.053)$ & \\
           & Narc Arr    & $0.019$ $(0.007)$ &  $0.094$ $(0.078)$ & \\
           & Trend       & $0.151$ $(0.014)$ &  $0.083$ $(0.045)$ & \\
           & UUW Arr     & $0.007$ $(0.001)$ & $-0.049$ $(0.075)$ & \\\hline
\multirow{8}{*}{\shortstack[l]{SSL: \\ Eq Odds (Pos)}}
           & Age         & $0.007$ $(0.010)$ & $-0.019$ $(0.002)$ & \multirow{8}{*}{$5.40$, $5.09$} \\
           & Vic Shoot   & $0.007$ $(0.003)$ & $-0.214$ $(0.076)$ \\
           & Vic Assault & $0.023$ $(0.000)$ &  $0.072$ $(0.137)$ \\
           & Vio Arr     & $0.019$ $(0.009)$ &  $0.512$ $(0.183)$ \\
           & Gang Aff    & $0.009$ $(0.002)$ & $-0.048$ $(0.025)$ \\
           & Narc Arr    & $0.022$ $(0.006)$ &  $0.247$ $(0.017)$ \\
           & Trend       & $0.202$ $(0.021)$ &  $0.438$ $(0.109)$ \\
           & UUW Arr     & $0.036$ $(0.069)$ & $-1.000$ $(0.473)$ \\\hline
\multirow{2}{*}{\shortstack[l]{Lipton: \\ Dem Parity}}
           & Work Exp.   & $0.072$ $(0.014)$ &  $0.150$ $(0.204)$ & \multirow{2}{*}{$2.19$, $2.09$} \\
           & Hair Len.   & $0.074$ $(0.043)$ &  $0.141$ $(0.114)$ \\\hline
\multirow{4}{*}{\shortstack[l]{Law School}}
           & LSAT        & $0.057$ $(0.012)$ &  $3.757$ $(0.462)$ & \multirow{4}{*}{$10.99$, $11.10$} \\
           & GPA         & $0.110$ $(0.027)$ & $-0.040$ $(0.005)$ \\
           & Gender=M    & $0.172$ $(0.061)$ &  $0.000$ $(0.000)$ \\
           & Gender=F    & $0.168$ $(0.057)$ &  $0.000$ $(0.000)$
\end{tabular}
}
\end{table}


\section{Comparison with FairTest}
\label{app:fairtest}
\begin{table}[h]
\caption{Results of FairTest~\citep{tramer2017fairtest} on the biased classifier from Section~\ref{sec:ssl}. Notably, FairTest does not report much bias based on narcotics arrest, while FairTest does. This is due to differences between FlipTest and FairTest, in that FairTest compares black and white individuals with feature values in the same range while FlipTest does not. This phenomenon is expanded on in Section~\ref{sec:fairtest}.
}
\label{fig:fairtest_ssl}
\centering
\begin{tabular}{lr}
\textbf{Subgroup}
& \textbf{Correlation} \\ \hline

All Black Population
&  $[-0.0688, -0.0523]$ \\ \hline

Age $\in (-0.94, -0.16]$, Trend $\in (-\infty, 0.50)$
& $[-0.1646, -0.1164]$ \\ \hline

Age $\in (-\infty, -0.94]$, Trend $\in (-0.98, -0.50)$, \\ Gang Aff. $\in (-\infty, 0.91)$
& $[0.0969, 0.1883]$ \\ \hline

Vio Off. $\in (-\infty, 0.95]$, Age $\in (-\infty, -0.94]$, \\
Trend $\in (-0.24, \infty)$, Gang Aff. $\in (-\infty, 0.91)$ 
& $[0.0953, 0.1534]$ \\ \hline

Vio Off. $\in (-\infty, 0.95]$, Age $\in (-\infty, -0.94]$, \\
Trend $\in (-0.50, \infty)$, Narc Arr.  $\in (2.6, \infty)$
& $[0.0943, 0.1512]$ \\ \hline

Vio Off. $\in (-\infty, 0.95]$, Age $\in (-0.16, \infty]$, \\
Trend $\in (-0.50, \infty)$, Gang Aff. $\in (-\infty, 0.91)$
& $[0.0853, 0.3012]$ \\ \hline

Age $\in (-\infty, -0.16]$, Trend $\in (-\infty, -0.50)$
& $[-0.1203, -0.0804]$ \\ \hline 

Trend $\in (\infty, -0.50)$
& $[-0.1017, -0.0727]$ \\ \hline

Vio Off. $\in (-\infty, 0.95]$, Age $\in (-0.95, -0.16]$, \\
Trend $\in (-0.50, \infty)$,
& $[-0.0972, -0.0610]$
\end{tabular}
\end{table}
FairTest largely reports bias against middling-aged and younger black individuals with lower to middling trend in criminal activity, low gang affiliation and low history of violent crime.

FlipTest agrees with FairTest's focus on younger black subjects, but other major feature that FlipTest isolates as defining the disadvantaged subgroup is narcotics arrests. The discrepancy in these results comes from the fact that the discrimination in this model is based on the use of a feature where higher values are correlated with race, and so a FlipTest type comparison between individuals with \emph{different} levels of narcotics arrests is more appropriate to uncover this. This is not to say that the results from FairTest are invalid; it simply uncovers bias of a different type.

\section{SSL case study extension: multiple biased features}
\label{app:ssl}
We also provide a test of a biased classifier that relies on a combination of features to be discriminatory. We choose the three features that are the most positively correlated with race and least correlated with a history of violent crime: victim of shooting incident, victim of assault, and narcotic arrests. We set $w_{\it age} = 50$, $w_{\it vs} = 20$, $w_{\it va} = 20$, and $w_{\it narc} = 20$. We give all other features weight 0. We use the same GAN and classifier calibration procedure as in the first biased classifier experiment. We also scale the features in the same way, to zero mean and unit variance.
We find that there are 1,968 high-risk black subjects that are mapped to low-risk white counterparts (out of 3,683 total high-risk black subjects), and 0 low-risk black subjects that are mapped to high-risk white counterparts. Again, the extreme asymmetry of \posflip and \negflip along with the size of \posflip suggest there is some discrimination in the model.

The marginal distributions can be seen in Figure~\ref{fig:SSL2}: young, black individuals with nonzero accounts of being a victim of a shooting and assault, with nonzero narcotic arrests, are discriminated against by this model. They are also more likely to be in a gang and to not have a record of unlawful use of weapon arrests. 

Upon viewing the transparency report to determine which features may be a worth further investigation into the model's workings, we see that being a victim of assault, of a shooting, and having a history of narcotic arrests are the features with the highest mean difference in the flipset. Trend in recent criminal activity is the most consistent, negative change, but it is closely followed by narcotic arrests and then victim of assault and shooting. Thus, again, the model's use of the features align with the correlations shown in the transparency report.

\clearpage

\begin{figure*}
	\begin{subfigure}{\textwidth}
		\centering
		\resizebox{\textwidth}{!}{
			\includegraphics{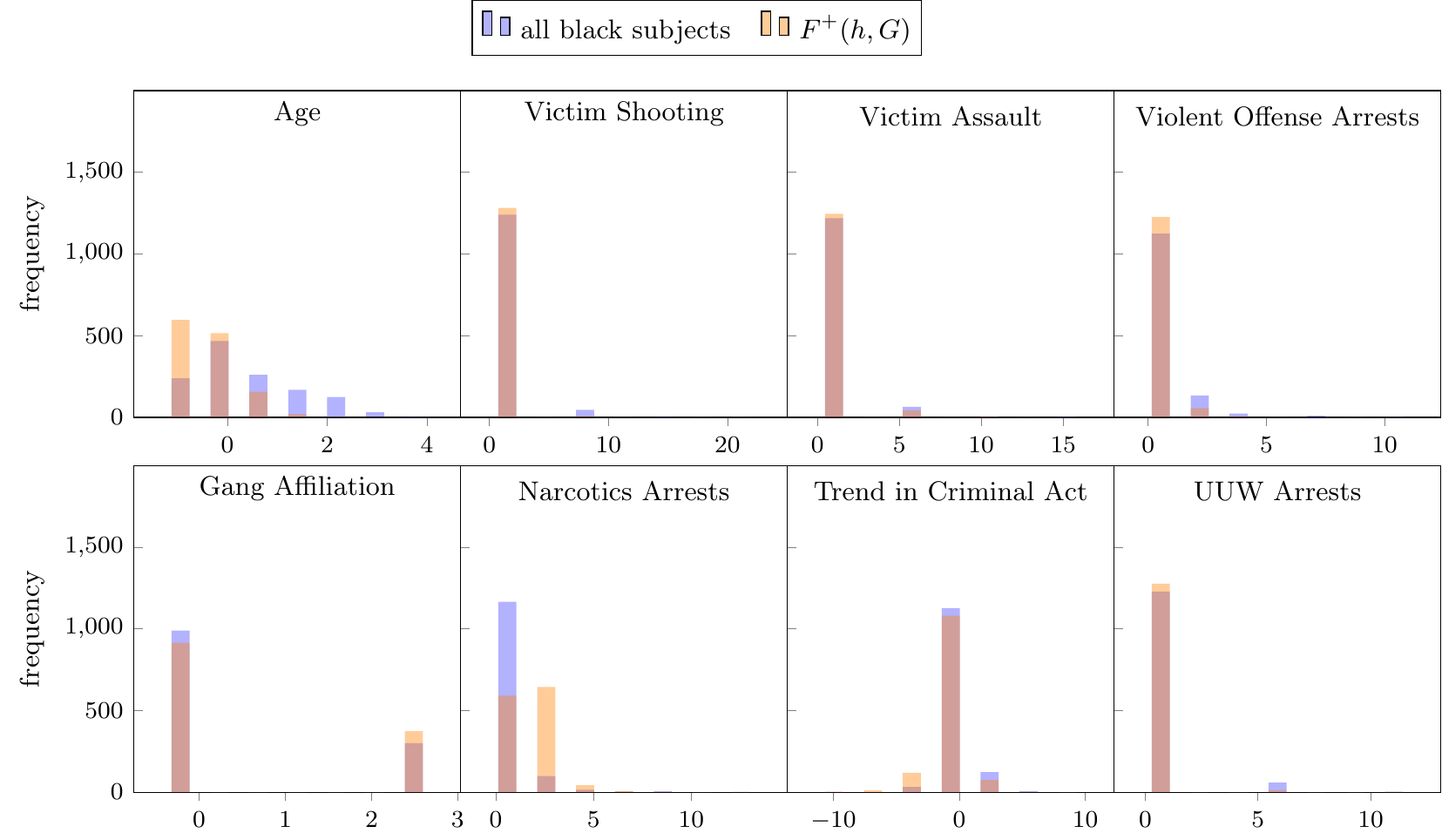}
		}
	\end{subfigure}
	\begin{subfigure}{\textwidth}
		\centering
		\begin{tabular}{lr}
			Feature      & Mean Sign \\ \hline
			Narcotic Arr & $0.99$    \\
			Trend        & $-0.78$   \\
			Gang Aff     & $0.26$    \\
			Vio Off      & $0.04$    \\
			Vic Assault  & $0.03$    \\
			Age          & $-0.01$   \\
			UUW Arr      & $0.00$    \\
			Vic Shooting & $0.00$
		\end{tabular}
		\quad
		\begin{tabular}{lr}
			Feature      & Mean Diff \\ \hline
			Narcotic Arr & $1.32$    \\
			Gang Aff     & $0.71$    \\
			Trend        & $-0.20$   \\
			Vic Assault  & $0.15$    \\
			Vio Off      & $0.11$    \\
			UUW Arr      & $0.20$    \\
			Age          & $-0.01$   \\
			Vic Shooting & $0.01$
		\end{tabular}
	\end{subfigure}
	\caption{All marginals and transparency report of the biased SSL classifier that puts weight only on age and narcotic arrests, from Section~\ref{sec:ssl_dem}. The plots show the distribution of the entire black subject population in comparison to that of \posflip, those that are labeled as high risk when their counterparts are not. The differences between these two distributions shed light on what subgroup may be treated differently by the model. We see that age and narcotic arrests have the largest departure from the rest of the population, while features like gang affiliation and being the victim of assault generally match the distribution of the overall population.
		Thus, the transparency report identifies the feature that the model relies on to bring about bias: narcotic arrests. Recall that all features are scaled to zero mean and unit variance.}
	\label{fig:SSL1.2}
\end{figure*}

\begin{figure*}
	\begin{subfigure}{\textwidth}
		\centering
		\resizebox{\textwidth}{!}{
			\includegraphics{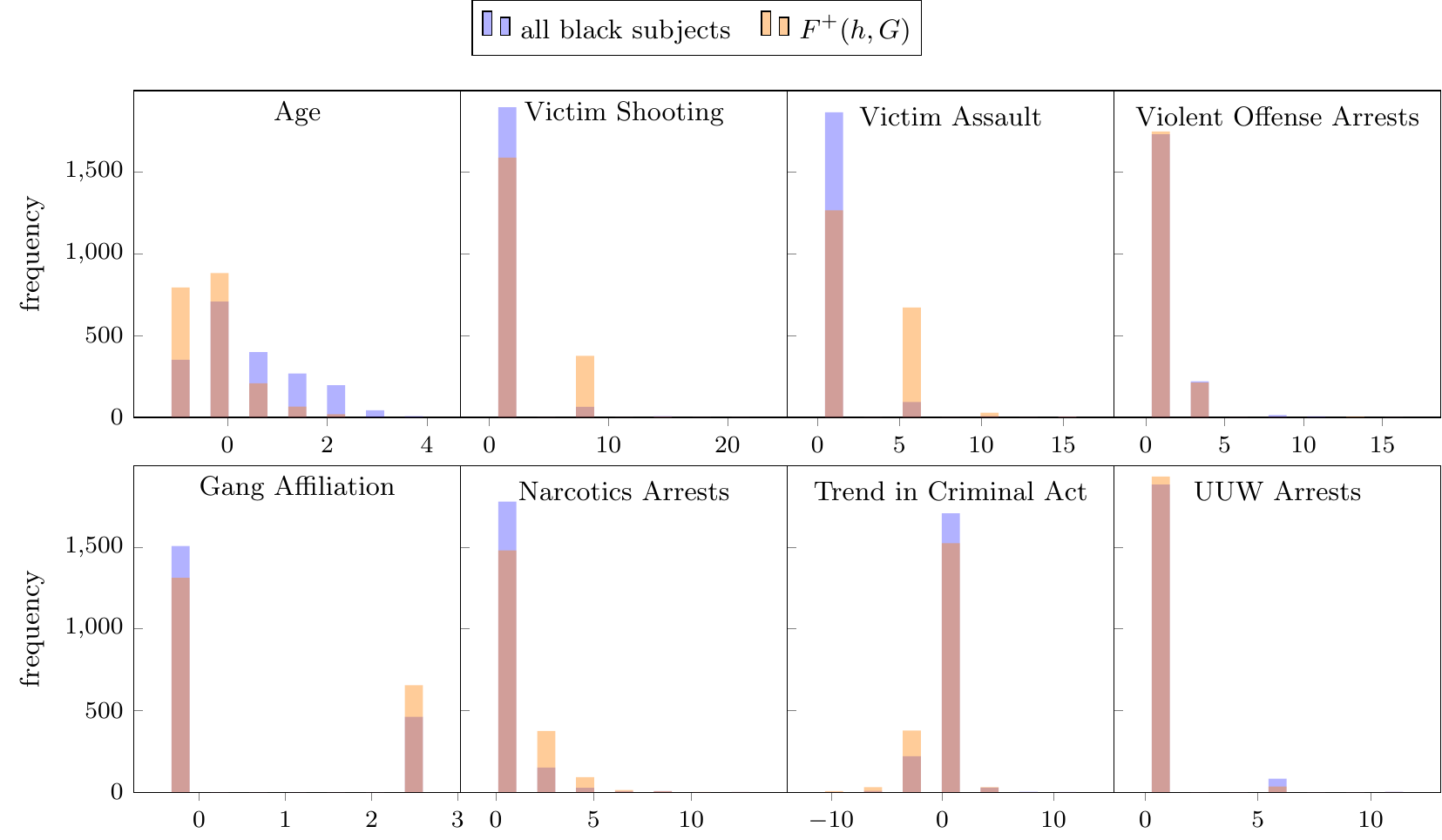}
		}
	\end{subfigure}
	\begin{subfigure}{\textwidth}
		\centering
		\begin{tabular}{lr}
			Feature      & Mean Sign \\ \hline
			Trend        & $-0.54$   \\
			Narcotic Arr & $0.43$    \\
			Vic Assault  & $0.39$    \\
			Vic Shooting & $0.23$    \\
			Gang Aff     & $0.10$    \\
			Vio Offense  & $0.09$    \\
			UUW Arr      & $0.01$    \\
			Age          & $-0.01$
		\end{tabular}
		\quad
		\begin{tabular}{lr}
			Feature      & Mean Diff \\ \hline
			Vic Shooting & $1.86$    \\
			Vic Assault  & $1.50$    \\
			Narcotic Arr & $0.62$    \\
			Gang Aff     & $0.26$    \\
			Vio Offense  & $0.22$    \\
			Trend        & $-0.09$   \\
			UUW Arr      & $0.05$    \\
			Age          & $-0.01$
		\end{tabular}
	\end{subfigure}
	\caption{Distribution of the high risk-to-low risk flipset for a black to white mapping for a model the model with multiple biased features in Appendix~\ref{app:ssl}. We did not plot the flipset in the other direction, as it was empty. We see that this model adversely affects young black individuals who have been the victim of shooting and assault incidents, who largely have nonzero narcotic arrests and have a slightly higher chance of being in a gang. Note that the distribution over violent offenses, and several other features, is relatively unchanged between the regular distribution and the flipset, suggesting that these features do not define the flipset distribution.
		The transparency report gives a strong case to investigate narcotic arrests, being a victim of a shooting, or assault, and possibly trend in recent criminal activity as features to investigate for causal influence. Three of these features were the ones we chose to overweight to make the model unfair.}
	\label{fig:SSL2}
\end{figure*}

\begin{figure*}
\begin{subfigure}{\textwidth}
\centering
\resizebox{\textwidth}{!}{
\includegraphics{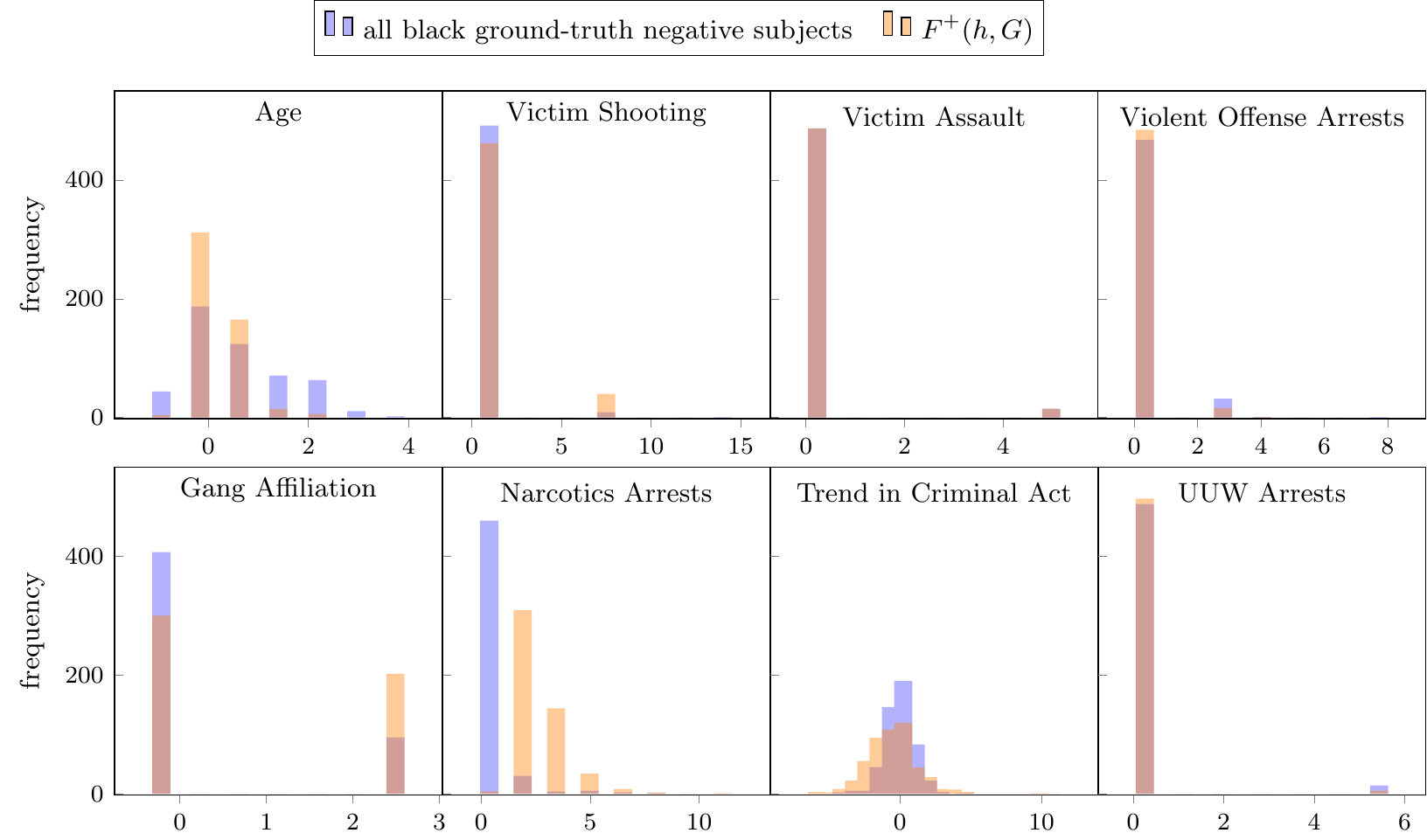}
}
\end{subfigure}
\begin{subfigure}{\textwidth}
\centering
\begin{tabular}{lr}
Feature      & Mean Sign \\ \hline
Narcotic Arr & $1.00$    \\
Trend        & $-0.85$   \\
Gang Aff     & $0.31$    \\
Vic Shooting & $0.07$    \\
Vic Assault  & $0.03$    \\
Vio Off      & $0.03$    \\
Age          & $0.00$    \\
UUW Arr      & $0.00$

\end{tabular}
\quad
\begin{tabular}{lr}
Feature      & Mean Diff \\ \hline
Narcotic Arr & $1.31$    \\
Gang Aff     & $0.84$    \\
Vic Shooting & $0.52$    \\
Trend        & $-0.36$   \\
Vic Assault  & $0.14$    \\
Vio Off      & $0.06$    \\
UUW Arr      & $0.05$    \\
Age          & $0.00$
\end{tabular}
\end{subfigure}
\caption{Flipset distribution and transparency report on \posflip for equalized odds testing on ground-truth negatives, from Section~\ref{sec:ssl_eq}.}
\label{fig:SSL_eq_app}
\end{figure*}

\begin{figure*}
\begin{subfigure}{\textwidth}
\centering
\resizebox{\textwidth}{!}{
\includegraphics{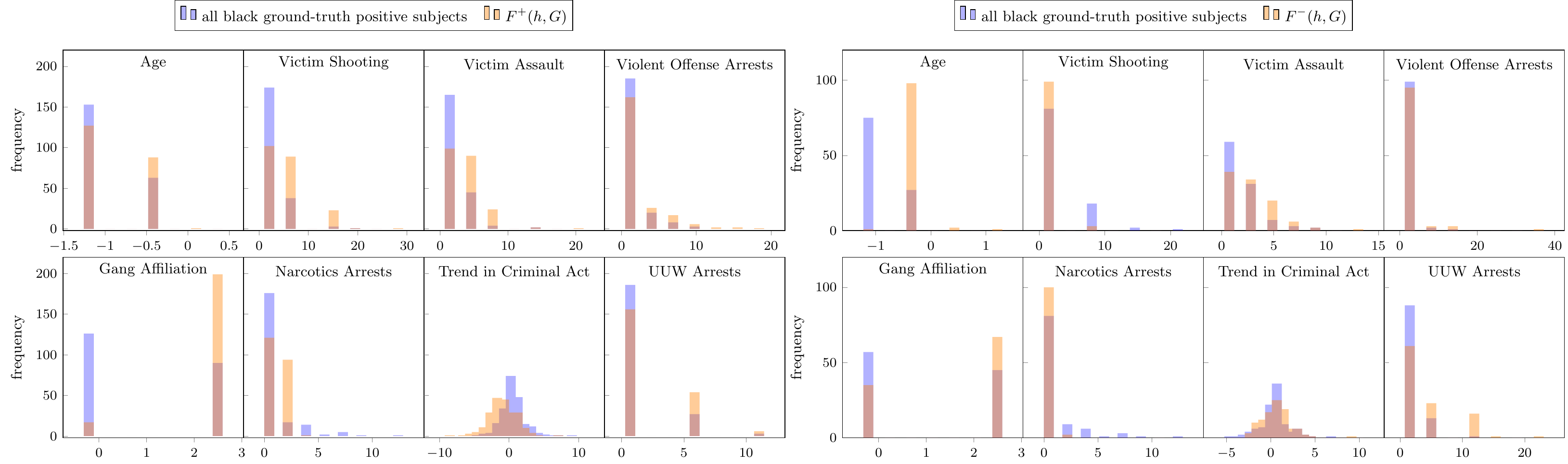}
}
\subcaption{Flipset distributions on \posflip and \negflip, respectively,}
\end{subfigure}

\begin{subfigure}{\textwidth}
\centering
\begin{tabular}{lr}
Feature      & Mean Sign \\ \hline
Narcotic Arr & $0.99$  \\
Trend        & $-0.87$ \\
Gang Aff     & $0.50$  \\
Vic Shooting & $0.44$  \\
Vio Off      & $0.27$  \\
UUW Arr      & $0.15$  \\
Vic Assault  & $0.13$  \\
Age.         & $-0.03$ \\
\end{tabular}
\quad
\begin{tabular}{lr}
Feature      & Mean Diff \\ \hline
Vic Shooting & $3.45$  \\
Gang Aff     & $1.37$  \\
Narcotic Arr & $1.19$  \\
Vio Off      & $0.89$  \\
UUW Arr      & $0.83$  \\
Vic Assault  & $0.73$  \\
Trend        & $-0.70$ \\
Age          & $-0.02$ \\
\end{tabular}
\subcaption{Transparency report on \posflip from the equalized odds test in Section~\ref{sec:ssl_eq}, mapping ground-truth positives to ground-truth positives.}
\quad
\begin{tabular}{lr}
Feature      & Mean Sign \\ \hline
Age          & $1.00$  \\
Gang Aff     & $0.61$  \\
UUW Arr      & $0.40$  \\
Vic Shooting & $0.32$  \\
Vio Off      & $0.20$  \\
Trend        & $-0.11$ \\
Vic Assault  & $0.02$  \\
Narcotic Arr & $0.00$  \\
\end{tabular}
\quad
\begin{tabular}{lr}
Feature      & Mean Diff \\ \hline
UUW Arr      & $3.10$  \\
Vic Assault  & $2.04$  \\
Gang Aff     & $1.68$  \\
Age          & $0.79$  \\
Vio Off      & $0.56$  \\
Vic Shooting & $0.19$  \\
Trend        & $-0.10$ \\
Narcotic Arr & $0.00$  \\
\end{tabular}
\subcaption{Transparency report on \negflip from the equalized odds test in Section~\ref{sec:ssl_eq}, mapping ground-truth positives to ground-truth positives.}
\end{subfigure}
\caption{Here we display the full results from Section~\ref{sec:ssl_eq}, where we test a GAN with FlipTest based on equalized odds. The flipset distribution and transparency report largely agree with the discrimination picked up in the demographic parity test, identifying narcotics arrests as a source of discrimination. We present results for \negflip as well, but we note that these results are of borderline significance since the negative flipset comprised only 3\% of the black ground-truth positive individuals marked as negative by the model.}
\label{fig:SSL_eq_app2}
\end{figure*}

\begin{figure*}
\begin{subfigure}{\textwidth}
\centering
\begin{tabular}{lr}
Feature      & Mean Sign \\ \hline
LSAT & $1.0$      \\
GPA        & $1.0$    \\
Race     & $*$      \\
Gender     & $0$       \\
\end{tabular}
\quad
\begin{tabular}{lr}
Feature      & Mean Diff \\ \hline
LSAT & $-1.43 $      \\
GPA     & $ -0.79 $      \\
Race     	 & $ * $     \\
Gender  & $ 0 $            \\
\end{tabular}
\subcaption{Transparency report created by using a GAN approximation of optimal transport to generate alternative inputs.}
\end{subfigure}
\begin{subfigure}{\textwidth}
\centering
\begin{tabular}{lr}
Feature      & Mean Sign \\ \hline
LSAT & $1.0$      \\
GPA        & $1.0$    \\
Race     & $*$      \\
Gender     & $0$       \\
\end{tabular}
\quad
\begin{tabular}{lr}
Feature      & Mean Diff \\ \hline
LSAT & $-1.88 $      \\
GPA     & $ -0.96 $      \\
Race     	 & $ * $     \\
Gender  & $ 0 $            \\
\end{tabular}
\caption{Transparency report created by using a Kusner et al.~\cite{kusner2017counterfactual} causal model to generate counterfactual inputs.}
\end{subfigure}
\caption{Transparency reports for the experiment in Section~\ref{sec:counterfactual}. In Section~\ref{sec:counterfactual} we have shown that the demographics of the flipsets largely agree, and here we see that transparency reports agree as well. We do not report the changes to race, since by definition of a flipset, all individuals change race in the flipset. The transparency report also points to reliance on LSAT scores.}
\label{fig:counter_trans}
\end{figure*}

\end{document}